\newcommand{\eqnref}[1]{%
    \ifnum\pdfstrcmp{(}{\unexpanded\expandafter{\@car#1\@nil}}=0
        Equation~\ref{#1}%
    \else
        Equation~(\ref{#1})%
    \fi
}
\definecolor{darkblue}{rgb}{0,0.08,0.45}
\definecolor{DarkBlue}{HTML}{2A4398}
\theoremstyle{plain}
\newtheorem{theorem}{Theorem}[section]
\theoremstyle{definition}
\newtheorem{definition}[theorem]{Definition}
\newtheorem{assumption}[theorem]{Assumption}
\theoremstyle{remark}
\definecolor{yellowhighlight}{RGB}{255, 255, 0}
\definecolor{pinkhighlight}{RGB}{255, 192, 203}
\definecolor{greenhighlight}{RGB}{144, 238, 144}
\newcommand{\hlyellow}[1]{{\sethlcolor{yellowhighlight}\hl{#1}}}
\newcolumntype{L}[1]{>{\raggedright\let\newline\\\arraybackslash\hspace{0pt}}m{#1\textwidth-2\tabcolsep}}
\newcolumntype{C}[1]{>{\centering\let\newline\\\arraybackslash\hspace{0pt}}m{#1\textwidth-2\tabcolsep}}
\newcolumntype{R}[1]{>{\raggedleft\let\newline\\\arraybackslash\hspace{0pt}}m{#1\textwidth-2\tabcolsep}}
\newcommand*{\indep}{
    \mathbin{
        \mathpalette{\@indep}{}
    }
}
\newcommand*{\nindep}{
    \mathbin{
        \mathpalette{\@indep}{\not}
    }
}
\newcommand*{\@indep}[2]{%
    \sbox0{$#1\perp\m@th$}
    \sbox2{$#1=$}
    \sbox4{$#1\vcenter{}$}
    \rlap{\copy0}
    \dimen@=\dimexpr\ht2-\ht4-.2pt\relax
    \kern\dimen@
    {#2}%
    \kern\dimen@
    \copy0 
}
\DeclareMathOperator*{\argmax}{argmax} 
\newcommand{\settitle}{\@maketitle}
\newcommand{\vbf}{\mathbf{v}}
\newcommand{\xbf}{\mathbf{x}}
\newcommand{\Vcal}{\mathcal{V}}
\setlist[enumerate]{leftmargin=*}
\setlist[itemize]{topsep=0pt}
\definecolor{highlight}{HTML}{81CE6D}
\definecolor{mycolor}{HTML}{2650CC}
\definecolor{second}{HTML}{666666}
\definecolor{rightcolor}{HTML}{28a745}
\definecolor{wrongcolor}{HTML}{dc3545}
\definecolor{Gray}{gray}{0.95}
\definecolor{Cyan}{rgb}{0.88,1,1}
\definecolor{LightCyan}{rgb}{0.92,1,1}
\definecolor{DarkCyan}{rgb}{0.82,1,1}
\definecolor{tabhighlight}{HTML}{e5e5e5}
\definecolor{questiongray}{RGB}{211, 211, 211}
\definecolor{reflectiongreen}{RGB}{144, 238, 144}
\definecolor{greedyblue}{RGB}{173, 216, 230}
\definecolor{beamyellow}{RGB}{255, 255, 180}
\icmltitlerunning{Reflection-Window Decoding: Text Generation with Selective Refinement}
\begin{document}
\twocolumn[
    \icmltitle{Reflection-Window Decoding: Text Generation with Selective Refinement}



    \icmlsetsymbol{equal}{*}

    \begin{icmlauthorlist}
        \icmlauthor{Zeyu Tang}{equal,cmu}
        \icmlauthor{Zhenhao Chen}{equal,mbzuai}
        \icmlauthor{Xiangchen Song}{cmu}
        \icmlauthor{Loka Li}{mbzuai}
        \icmlauthor{Yunlong Deng}{mbzuai}
        \icmlauthor{Yifan Shen}{mbzuai}
        \icmlauthor{Guangyi Chen}{cmu,mbzuai}
        \icmlauthor{Peter Spirtes}{cmu}
        \icmlauthor{Kun Zhang}{cmu,mbzuai}
    \end{icmlauthorlist}

    \icmlaffiliation{cmu}{Carnegie Mellon University}
    \icmlaffiliation{mbzuai}{Mohamed bin Zayed University of Artificial Intelligence}

    \icmlcorrespondingauthor{Zeyu Tang}{\href{mailto:zeyutang@cmu.edu}{\texttt{zeyutang@cmu.edu}}}
    \icmlcorrespondingauthor{Zhenhao Chen}{\href{mailto:zhenhao.chen@mbzuai.ac.ae}{\texttt{zhenhao.chen@mbzuai.ac.ae}}}

    \icmlkeywords{Reflection Window, Selective Refinement, Autoregressive, Decoding, Large Language Model}

    \vskip 0.3in
]



\printAffiliationsAndNotice{\icmlEqualContribution} 

\begin{abstract}
    \looseness=-1
    The autoregressive decoding for text generation in large language models (LLMs), while widely used, is inherently suboptimal due to the lack of a built-in mechanism to perform refinement and/or correction of the generated content. In this paper, we consider optimality in terms of the joint probability over the generated response, when jointly considering all tokens at the same time. We theoretically characterize the potential deviation of the autoregressively generated response from its globally optimal counterpart that is of the same length. Our analysis suggests that we need to be cautious when noticeable uncertainty arises during text generation, which may signal the sub-optimality of the generation history. To address the pitfall of autoregressive decoding for text generation, we propose an approach that incorporates a sliding reflection window and a pausing criterion, such that refinement and generation can be carried out interchangeably as the decoding proceeds. Our selective refinement framework strikes a balance between efficiency and optimality, and our extensive experimental results demonstrate the effectiveness of our approach.
\end{abstract}

\section{Introduction}\label{main:introduction}
\looseness=-1
It is a well-known insight in optimization theory that coordinate-wise optimization, conditioned on the previously optimized arguments along each axis sequentially, generally does not guarantee finding the global optimum \citep{torn1989global,nocedal1999numerical}.
Similarly, in the context of decoding in large language models (LLMs), expecting to achieve a globally optimal response by sequentially accumulating per-token optimal decisions, as done in purely autoregressive decoding, may be overly optimistic.
Despite significant recent progress \citep{vaswani2017attention,radford2019language,brown2020language,openai2023gpt4,touvron2023llama,google2023gemini,meta2024llama,deepmind2024gemma,abdin2024phi4,deepseek2025deepseek}, how to approach the optimal text that one can possibly sample from a model still remains an open question.

\looseness=-1
Previous works have demonstrated challenges faced by autoregressive decoding in terms of the handling of long sequences \citep{wu2021autoformer} and the inefficient inference \citep{lin2021limitations,li2022elmer}.
At the level of decoding, other than the traditional greedy search, \citet{holtzman2020curious} proposed Top-$p$ sampling (also known as Nucleus Sampling), a stochastic method that adjusts the next-token set based on the shape of conditional probability mass function.
Alternatively, different from the cumulated probability mass, Top-$k$ sampling limits the number of available options when sampling for the next token \citep{fan2018hierarchical,holtzman2018learning,radford2019language}.
Another empirical technique involves adjusting the shape of probability distribution with the temperature hyperparameter \citep{fan2018hierarchical,holtzman2018learning,shih2023long,peeperkorn2024temperature}, or decoding as optimization with more than one models \citep{li2023contrastive}.

In order to improve the sampling efficiency, speculative decoding approaches have been proposed, where multiple tokens are predicted in parallel as if one were sampling from the model (or its lighter counterpart) repetitively \citep{leviathan2023fast,chen2023accelerating,xia2023speculative,kim2024speculative,sun2024spectr}.
Efficient inference with beam search \citep{xie2024self,zhu2024deductive,wei2024confidence,yang2024language} and probabilistic programs \citep{lew2023smcp3} have been explored in the recent literature.
Improving generated content through high-level behaviors, e.g., through instructing self-correction or conducting self-improvement with external or internal feedback \citep{yao2022react,bai2022constitutional,pan2023automatically,shinn2023reflexion,ganguli2023capacity,chen2023teaching,kim2023language,tyen2023llms,madaan2024self}, has also been studied.

\looseness=-1
While previous literature has explored various methods to enhance generated content, the fundamental limitation of autoregressive decoding for text generation remains under-explored.
This gap represents a distinct perspective, different from high-level model behaviors (e.g., self-refinement studied in \citealt{madaan2024self}) or inference efficiency (e.g., various speculative decoding methods surveyed in \citealt{xia2024unlocking}).
In this paper, we theoretically characterize the inherent shortcoming of autoregressive decoding and propose an empirical method to mitigate this issue.
Our contributions can be summarized as follows:
\begin{itemize}[topsep=0pt,leftmargin=*]
    \item We theoretically characterize the sub-optimality of autoregressive decoding for text generation, demonstrating its lack of a built-in mechanism to perform refinement/correction of generated content at the decoding level.
    \item We propose a framework involving a sliding reflection window and a corresponding pause criterion, enabling an interchangeable process of refinement and generation.
    \item Through extensive empirical evaluations, our approach demonstrates significant improvement over existing decoding approaches, and maintains performance comparable or superior to beam search while being more efficient.
\end{itemize}

\section{Motivation and High-Level Illustration}\label{main:motivation}
In this section, we first present our motivations behind addressing the inherent shortcoming of purely autoregressive decoding for text generation (Section~\ref{main:motivation:issue}).
Then in Section~\ref{main:motivation:reflection_window}, we present a high-level summary of our proposed approach involving interchangeably switching between refinement (upon reflection on previously generated content) and generation (of the additional new content).

\subsection{Inherent Shortcoming of Autoregressive Decoding}\label{main:motivation:issue}
\looseness=-1
Insight from the optimization theory highlights the gap between coordinate-wise accumulation of optimum and the global optimum \citep{torn1989global,nocedal1999numerical}.
Recent research advances in cognitive linguistics have also argued that language is primarily a tool for communication (for humans) rather than thought \citep{fedorenko2024language}.
While language naturally unfolds in a one-dimensional sequence, its underlying dependence pattern extends beyond a purely autoregressive structure.

Let us consider an example of writing a novel.
For a long-format writing like novels, outlining (also referred to as plotting) is essential for structuring ideas, planning narratives, and crafting engaging drafts \citep{king2000writing,serravallo2017writing}.
Sub-goals refer to relatively small and achievable tasks that guide the author through each stage of the story, for instance, the setting of the circumstance, the element of tension and emotion, the sensory imagination of the scene.

\looseness=-1
As we illustrate in Figure~\ref{fig:comparison:selection}, $X_i^*$'s represent words or phrases in the novel, and $S_i$'s represent sub-goals, which may be related in a hierarchical way.
For instance, sub-goals within a single scene altogether serve the purpose of furthering the development of the story.
We model sub-goals in terms of selection variables $S_i$ since they represent constraints or objectives to achieve, which involve certain criteria to be satisfied over the variables that they operate upon.\footnote{
    This modeling choice is consistent with the modeling of causal relations among variables of interest through selection variables in a directed acyclic graph (DAG) \citep{spirtes1993causation,pearl2009causality}.
}
As we can see from Figure~\ref{fig:comparison:selection}, the variables in optimal sequence (the novel in this example) $X_i^*$'s \emph{jointly} satisfy criteria, or optimize objectives, specified by sub-goals $S_i$'s.
This indicates that the best $X_i^*$ in the optimal sequence depends on best values of all other $X_j^*$'s.
However, with an autoregressive way of text generation, as illustrated in Figure~\ref{fig:comparison:autoregressive}, we only allow $X_i$ to depend on $X_j$'s if $j < i$, which is clearly sub-optimal.

\begin{figure}[t]
    \centering
    \begin{subfigure}{.48\columnwidth}
        \centering
        \includegraphics[height=7.5ex]{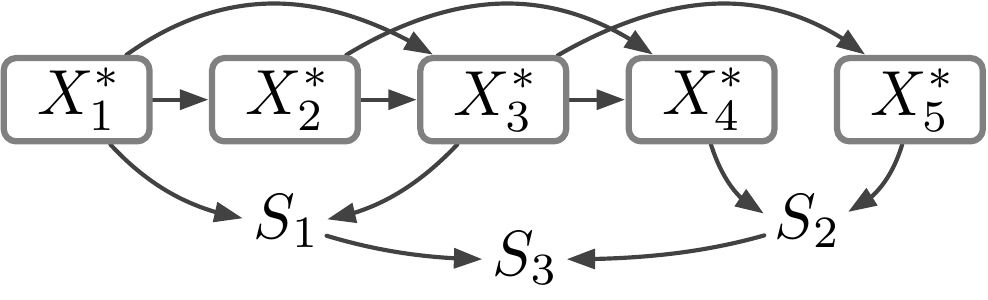}
        \caption{\small
            (Sub-)goals jointly satisfied by the optimal sequence}
        \label{fig:comparison:selection}
    \end{subfigure}
    \hfill
    \begin{subfigure}[b]{.48\columnwidth}
        \centering
        \includegraphics[height=7.5ex]{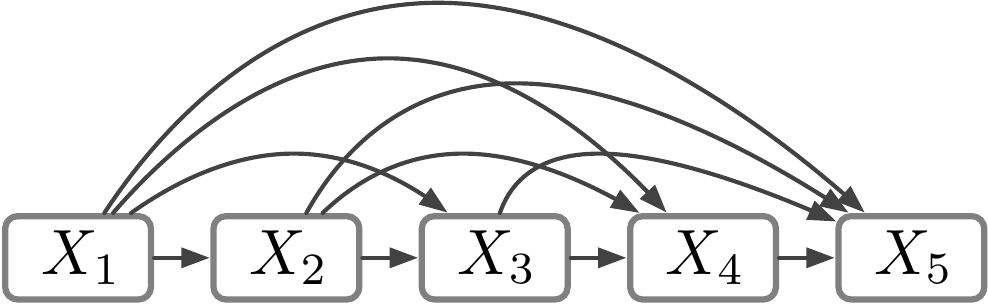}
        \caption{\small
            Purely autoregressive way of decoding for text generation}
        \label{fig:comparison:autoregressive}
    \end{subfigure}
    \caption{
        \looseness=-1
        Illustrative diagrams of different dependence patterns among variables representing tokens or phrases in text generation.
        Panel (a): the dependence pattern among variables in the optimal sequence where there are (sub-)goals $S_i$'s to achieve, specifying conditions or constraints that should \emph{jointly} be satisfied by $X_i^*$'s.
        Panel (b): the autoregressive way of text generation, where $X_i$ is only allowed to depend on $X_j$ if $j < i$.
    }
    \label{fig:comparison}
    \vspace{-2ex}
\end{figure}

\subsection{Selective Refinement Through Reflection Window}\label{main:motivation:reflection_window}
As illustrated in Section~\ref{main:motivation:issue}, one outcome of limiting the dependence pattern to the autoregressive structure is the lack of a built-in mechanism to correct or refine previously generated content at the decoding level, since what was generated is not subject to further edit (if without further processing).
In this subsection, we present a high-level summary of our approach that attempts to address this issue.\footnote{We present in detail our technical approach in Section~\ref{main:experiments:technical_approach}.}
Specifically, we propose a selective refinement framework that facilitates an interchangeable process of refinement and generation, so that the overall response satisfies requirements or objectives that operate jointly over all involved tokens.

\begin{figure*}[t]
    \centering
    \includegraphics[height=12.5ex]{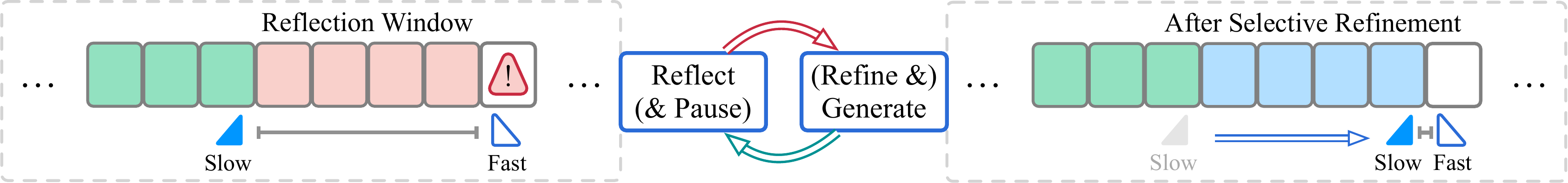}
    \caption{
        \looseness=-1
        Overview of our approach to address the inherent shortcoming of autoregressive decoding for text generation, where the LLM interchangeably switches between refinement and generation.
        The fast pointer pauses if the \emph{pausing criterion} is triggered, indicating the existence of a potential issue in the generated history.
        Then, the model \emph{refines/corrects} the content between the fast and slow pointers before continuing generation, so that the slow pointer catches up with the fast pointer before the latter can move on.
        Reflection window refers to the content between the fast and slow pointers.
    }
    \label{fig:pipeline}
\end{figure*}

\looseness=-1
As illustrated in Figures~\ref{fig:pipeline} and \ref{fig:example}, we can use fast and slow pointers on the generated content to form segments of a certain length, namely, the sliding reflection window, and perform potential refinements within this sliding window as the text generation proceeds.\footnote{
    The naming of pointers is motivated by \citet{kahneman2011thinking}.
}
Our reflection-window decoding framework allows for revision before the entire output is completed, which offers several advantages.

\looseness=-1
First, we can improve the generated content in a timely manner.
If there are multiple potential issues in the generation history, the revision after finishing the generation can be inefficient since we allow errors to accumulate.
In other words, without a built-in mechanism for refinement or correction at the decoding level, we are forced to rely on high-level model behaviors and operate at a coarser granularity.
This often involves regenerating entire sentences (rather than refining words or phrases), and/or editing through multiple iterations (rather than interchangeably generate and refine in a single run), as in self-correction and self-improvement approaches \citep{yao2022react,bai2022constitutional,pan2023automatically,shinn2023reflexion,ganguli2023capacity,chen2023teaching,kim2023language,tyen2023llms,madaan2024self}.

Second, our focus on selective refinement during decoding is not solely driven by inference efficiency considerations.
The primary goal of previous approaches, e.g., speculative decoding \citep{leviathan2023fast,chen2023accelerating,xia2023speculative,kim2024speculative,sun2024spectr,xia2024unlocking}, is to accelerate sampling from (a lighter version of) the original model, while the underlying decoding mechanism remains purely autoregressive.

Third, due to the one-dimensional progression of text generation, our sliding reflection window mechanism, given a pausing criterion, enables timely and assured detection of issues in the generated text.
Our framework complements previous approaches, and furthermore, offers versatility.
One can incorporate \emph{pausing criteria} and \emph{refinement/correction methods} at the decoding level, while preserving the ability to further leverage strategies that rely on high-level behaviors.

\looseness=-1
The empirical pausing criteria we use (detailed in Section~\ref{main:experiments}) are guided by our theoretical characterization of the sub-optimality of autoregressive text generation, and to this theoretical analysis we now turn.

\begin{figure*}[t]
    \centering
    \includegraphics[height=22.5ex]{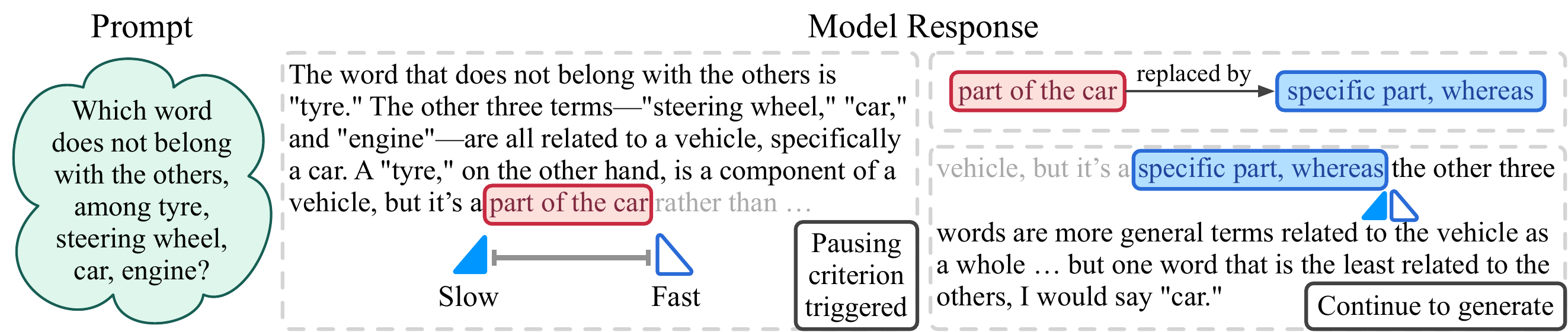}
    \caption{
        \looseness=-1
        A concrete example demonstrating our reflection-window decoding.
    }
    \label{fig:example}
\end{figure*}

\section{Theoretical Characterization of the Sub-Optimality of Autoregressive Decoding}\label{main:theory}
\looseness=-1
We theoretically characterize the sub-optimality of autoregressive text generation.
We show that even if an LLM is sufficiently trained and can perfectly capture any autoregressive decomposition of the joint density, the autoregressive way of text generation can still deviate from the globally optimal response, even for the well-defined objective of maximizing output probability given a fixed length (setting aside whether this objective fully aligns with the ultimate goal).

Let us denote a token from the vocabulary $\Vcal$ as $w_v \in \Vcal$, whose index in the vocabulary is $v \in \lvert \Vcal \rvert$.
We use ``$i:j$'' to denote the increasing integer sequence from $i$ to $j$ if $i \leq j$, e.g., $1:t \coloneqq 1, 2, \ldots, t$ if $t > 1$, otherwise, $i:j \coloneqq \varnothing$.

\begin{definition}[Globally Optimal Length-$T$ Response]\label{def:optimal_sequence}
    We say a sequence $w_{\vbf_T^*[1]} w_{\vbf_T^*[2]} \ldots w_{\vbf_T^*[T]}$ is globally optimal among all possible length-$T$ responses following the prompt $X_{\leq 0}$, if it has the highest ground-truth conditional probability, denoted by $f(X_{1:t} \mid X_{\leq 0})$ where $t \in [1, T]$:
    \begin{equation}
        \small
        \begin{split}
             & ~ \vbf_T^* = (\vbf_T^*[1], \vbf_T^*[2], \ldots, \vbf_T^*[T])                                                                 \\
             & \coloneqq \argmax_{v_i \in \lvert \Vcal \rvert, i=1,2,\ldots,T} f(X_{1:T} = w_{v_1} w_{v_2} \ldots w_{v_T} \mid X_{\leq 0}).
        \end{split}
    \end{equation}
\end{definition}

\begin{definition}[Stepwise Optimal Length-$T$ Response]\label{def:stepwise_sequence}
    We say a sequence $w_{\widehat{\vbf}_T[1]} w_{\widehat{\vbf}_T[2]} \ldots w_{\widehat{\vbf}_T[T]}$ is stepwise optimal for prompt $X_{\leq 0}$, if the sequence consists of tokens that correspond to highest token-by-token conditional probabilities, denoted by $g(X_{t} \mid X_{1:t-1}, X_{\leq 0})$ where $t \in [1, T]$:\footnote{
    For notional clarity, instead of $g(X_{t} \mid X_{1:t-1}=\xbf_{1:t-1}, X_{\leq 0}=\xbf_{\leq 0})$, we use shorthand notations in the conditioning set, i.e., $g(X_{t} \mid X_{1:t-1}, X_{\leq 0})$.
    We will adopt this simplification throughout the paper, as long as it remains unambiguous.}
    \begin{equation}
        \small
        \begin{split}
            \widehat{\vbf}_T[1]          & \coloneqq \argmax_{v_1 \in \lvert \Vcal \rvert} g(X_1 = w_{v_1} \mid X_{\leq 0}),                                \\
            \widehat{\vbf}_T[2]          & \coloneqq \argmax_{v_2 \in \lvert \Vcal \rvert} g(X_2 = w_{v_2} \mid X_1 = w_{\widehat{\vbf}_T[1]}, X_{\leq 0}), \\
                                         & \cdots                                                                                                           \\
            \widehat{\vbf}_T[T]          & \coloneqq \argmax_{v_T \in \lvert \Vcal \rvert}
            \begin{array}{l}
                g\left(X_T = w_{v_T} \mid X_{\leq 0}, \text{ and } \right. \\
                ~~ \left. X_{1:T - 1} = w_{\widehat{\vbf}_T[1]} \ldots w_{\widehat{\vbf}_T[T - 1]}\right)
            \end{array},    \\
            \text{and } \widehat{\vbf}_T & = (\widehat{\vbf}_T[1], \widehat{\vbf}_T[2], \ldots, \widehat{\vbf}_T[T]).
        \end{split}
    \end{equation}
\end{definition}

In general, the longer the sequence, the lower the joint probability tends to be.
The fair comparison of optimality should be length-specific, and the optimal response of a shorter length is not necessarily identical to the prefix of the optimal response that is longer in length.
For instance, if we were to use 10 words to distinguish between joint and conditional densities, one might say ``\emph{joint density combines all variables; conditional adjusts for known variables}.''
However, if we can use 15 words, one might say ``\emph{joint density reflects combined probabilities of all variables; conditional density adjusts probabilities given known variables}.''
Therefore, we explicitly keep the length $T$ in the notation of vocabulary indices of tokens that constitute the length-$T$ responses.
\begin{assumption}[Oracle LLM]\label{aspt:oracle_LLM}
    We say an autoregressive LLM is an oracle LLM, if the following relation holds for any response of a length $T \geq 1$:
    \begin{equation}
        \small
        f(X_{1:T} \mid X_{\leq 0}) = \Pi_{t = 1}^{T} g(X_{t} \mid X_{1:t-1}, X_{\leq 0}).
    \end{equation}
\end{assumption}
Assumption~\ref{aspt:oracle_LLM} specifies that, after given the prompt or generated text history $X_{\leq 0}$, an oracle (or very well-trained) LLM can recover the ground-truth probability of $X_{1:T}$ as a whole follows $X_{\leq 0}$, by multiplying token-by-token generating probabilities in an autoregressive way.\footnote{
    Here, we implicitly assume that the context length of LLM is sufficiently large to allow for a meaningful discussion.
}
We would like to note that Assumption~\ref{aspt:oracle_LLM} only states that an oracle LLM can perfectly capture the autoregressive way of probability partitioning of text sequences, and this itself does \emph{not} guarantee the equivalence between stepwise optimal response and the same-length globally optimal response for $T > 1$.\footnote{
    When $T = 1$, i.e., if the response is of a length $1$, the stepwise optimal is just the globally optimal for an oracle LLM, since there is only one step in total, and $f(X_1 \mid X_{\leq 0}) = g(X_1 \mid X_{\leq 0})$.
}

\begin{assumption}[Strict Preference Among Same-Length Sequences]\label{aspt:strict_preference}
    For any two length-$T$ different sequences following the prompt $X_{\leq 0}$, there is a strict preference between them in terms of the ground-truth conditional probability $f(X_{1:T} \mid X_{\leq 0})$.
    In other words, the ground-truth conditional probabilities of two length-$T$ sequences equal to each other if and only if the sequences are identical.
\end{assumption}
Assumption~\ref{aspt:strict_preference} specifies that from the ground-truth conditional probability perspective, there is a strict preference between how well two different same-length responses follow the prompt $X_{\leq 0}$, i.e., the ground-truth probability mass function $f(X_{1:T} \mid X_{\leq 0})$ is injective for any given $T > 0$.

\begin{assumption}[Irreversible Advantage Once Manifested]\label{aspt:no_return_after_detour}
    When a stepwise optimal length-$T$ response from an oracle (Assumption~\ref{aspt:oracle_LLM}) autoregressive LLM $w_{\widehat{\vbf}_T[1]} w_{\widehat{\vbf}_T[2]} \ldots w_{\widehat{\vbf}_T[T]}$ is not the globally optimal length-$T$ response $w_{\vbf_T^*[1]} w_{\vbf_T^*[2]} \ldots w_{\vbf_T^*[T]}$, then if the deviation manifests itself at the length-$L$ ($1 < L \leq T$) prefix-sequences, the advantage of the globally optimal length-$T$ response will not be reversed afterwards:
    \begin{equation*}
        \small
        \begin{split}
             & \text{if }\exists L \in (1, T], f(X_{1:L} = w_{\widehat{\vbf}_T[1]} w_{\widehat{\vbf}_T[2]} \ldots w_{\widehat{\vbf}_T[L]} \mid X_{\leq 0})                                                  \\
             & \quad\quad\quad\quad\quad\quad < f(X_{1:L} = w_{\vbf_T^*[1]} w_{\vbf_T^*[2]} \ldots w_{\vbf_T^*[L]} \mid X_{\leq 0}),                                                                        \\
             & \text{then }\forall M \in [L, T], f(X_{1:M} = w_{\widehat{\vbf}_T[1]} \ldots w_{\widehat{\vbf}_T[M]} \mid X_{\leq 0}) \\
             & \quad\quad\quad\quad\quad\quad < f(X_{1:M} = w_{\vbf_T^*[1]} \ldots w_{\vbf_T^*[M]} \mid X_{\leq 0}).
        \end{split}
    \end{equation*}
\end{assumption}
Assumption~\ref{aspt:no_return_after_detour} specifies that if the advantage (in terms of a higher ground-truth conditional probability) of the globally optimal length-$T$ sequence can be observed at the length-$L$ prefix-sequence, such advantage will not be reversed when considering longer prefix-sequences.

\begin{theorem}[Indication of Deviation from the Globally Optimal Length-$T$ Response]\label{thm:small_conditional_prob}
    Given the prompt $X_{\leq 0}$, when an oracle LLM (Assumption~\ref{aspt:oracle_LLM}) generates a stepwise optimal length-$T$ response which is not the globally optimal response with the same length, let $L \leq T$ denote the minimum length of prefix-sequence needed in order for such deviation to manifest itself (Assumptions \ref{aspt:strict_preference} and \ref{aspt:no_return_after_detour}).
    Then, the deviation from the globally optimal response happens at some step $K < L$.
    Furthermore, the conditional probability when generating the token $w_{v_L} \in \Vcal$ is strictly smaller than a positive number, which itself is strictly smaller than $1$, i.e.,
    \begin{equation*}
        \small
        \begin{split}
             & \max_{w \in \Vcal} g(X_L = w \mid X_{1:L-1} = w_{\widehat{\vbf}_T[1]} \ldots w_{\widehat{\vbf}_T[L - 1]}, X_{\leq 0}) < \epsilon_L,                                                                                                         \\
             & \text{and } \epsilon_L = \frac{f(X_{1:L} = w_{\vbf_T^*[1]} \ldots w_{\vbf_T^*[L - 1]} w_{\vbf_T^*[L]} \mid X_{\leq 0})}{f(X_{1:L - 1} = w_{\widehat{\vbf}_T[1]} \ldots w_{\widehat{\vbf}_T[L - 1]} \mid X_{\leq 0})} < 1.
        \end{split}
    \end{equation*}
\end{theorem}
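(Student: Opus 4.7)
The plan is to unpack ``$L$ is the minimum length at which the deviation manifests,'' use Assumption~\ref{aspt:strict_preference} to upgrade a non-strict comparison of length-$(L-1)$ prefixes into a strict one, and then read off both the $\epsilon_L$ upper bound and the fact that $\epsilon_L<1$ from the oracle factorization in Assumption~\ref{aspt:oracle_LLM}.

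First, I would translate the minimality of $L$ into two facts: the strict manifestation $f(X_{1:L}=w_{\widehat{\vbf}_T[1]}\ldots w_{\widehat{\vbf}_T[L]}\mid X_{\leq 0}) < f(X_{1:L}=w_{\vbf_T^*[1]}\ldots w_{\vbf_T^*[L]}\mid X_{\leq 0})$ at length $L$, together with the reversed non-strict ordering at every $L'<L$; the case $L=1$ is ruled out because the oracle factorization collapses $f$ to $g$ at length one and the stepwise token is already the $f$-maximizer. I would then argue by contradiction that the two length-$(L-1)$ prefixes $w_{\widehat{\vbf}_T[1]}\ldots w_{\widehat{\vbf}_T[L-1]}$ and $w_{\vbf_T^*[1]}\ldots w_{\vbf_T^*[L-1]}$ must be distinct: if they coincided, then at step $L$ the stepwise argmax would, under the common conditional $g(\cdot\mid X_{1:L-1},X_{\leq 0})$, score at least as high as $w_{\vbf_T^*[L]}$, so multiplying by the shared length-$(L-1)$ joint probability would force $f(X_{1:L}=w_{\widehat{\vbf}_T[1]}\ldots w_{\widehat{\vbf}_T[L]}\mid X_{\leq 0})\geq f(X_{1:L}=w_{\vbf_T^*[1]}\ldots w_{\vbf_T^*[L]}\mid X_{\leq 0})$, contradicting manifestation. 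Distinctness at length $L-1$ immediately yields the first claim, since the first token-level disagreement between $\widehat{\vbf}_T$ and $\vbf_T^*$ then must occur at some index $K\leq L-1<L$.

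Next, I would apply Assumption~\ref{aspt:strict_preference} to the distinct length-$(L-1)$ prefixes to upgrade the non-strict ordering at $L-1$ into the strict $f(X_{1:L-1}=w_{\widehat{\vbf}_T[1]}\ldots w_{\widehat{\vbf}_T[L-1]}\mid X_{\leq 0}) > f(X_{1:L-1}=w_{\vbf_T^*[1]}\ldots w_{\vbf_T^*[L-1]}\mid X_{\leq 0})$. Using Assumption~\ref{aspt:oracle_LLM} to rewrite $\max_{w\in\Vcal}g(X_L=w\mid X_{1:L-1}=w_{\widehat{\vbf}_T[1]}\ldots w_{\widehat{\vbf}_T[L-1]},X_{\leq 0})$ as the ratio of the length-$L$ and length-$(L-1)$ stepwise joint probabilities, and dividing the manifestation inequality by the positive denominator, gives exactly the claimed strict upper bound by $\epsilon_L$. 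For $\epsilon_L<1$ I would chain $f(X_{1:L}=w_{\vbf_T^*[1]}\ldots w_{\vbf_T^*[L]}\mid X_{\leq 0})\leq f(X_{1:L-1}=w_{\vbf_T^*[1]}\ldots w_{\vbf_T^*[L-1]}\mid X_{\leq 0})$ (the extra oracle factor at step $L$ is at most one) with the just-derived strict inequality at length $L-1$, so the numerator of $\epsilon_L$ ends up strictly below its denominator.

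The main obstacle is precisely this upgrade at length $L-1$ from the non-strict $\geq$ supplied by minimality alone to the strict $>$ needed for $\epsilon_L<1$: without it, the denominator of $\epsilon_L$ could coincide with $f(X_{1:L-1}=w_{\vbf_T^*[1]}\ldots w_{\vbf_T^*[L-1]}\mid X_{\leq 0})$ and the claim $\epsilon_L<1$ would collapse. The contradiction-style distinctness argument combined with Assumption~\ref{aspt:strict_preference} is what carries the proof; everything else is bookkeeping with Assumption~\ref{aspt:oracle_LLM}.
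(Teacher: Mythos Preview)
Your proposal is correct and follows essentially the same route as the paper's proof: both rule out $L=1$ via the oracle identity, use a contradiction argument (if the stepwise and globally optimal prefixes agreed up to length $L-1$, the stepwise choice at step $L$ would prevent the manifestation inequality) to force distinctness of the length-$(L-1)$ prefixes, invoke Assumption~\ref{aspt:strict_preference} to sharpen the non-strict comparison at $L-1$ to a strict one, and then read off both bounds from the oracle factorization. Your version is in fact slightly more streamlined, since you target the $L-1$ prefixes directly rather than first locating some earlier index $K$ with a strict preference and propagating forward.
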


Theorem~\ref{thm:small_conditional_prob} provides a necessary (but not sufficient) condition for the deviation of the stepwise optimal length-$T$ response from the same-length globally optimal response.
The uncertainty (i.e., low conditional probabilities) in generating the next token can result from different factors.
For instance, a previous mistake or detour makes it challenging to continue in any way that could possibly satisfy the goal specified by the prompt.
Such uncertainty can also result from multiple valid ways to proceed in order to achieve the goal.
Although we do not have access to the ground-truth conditional probability $f(X_{1:T} \mid X_{\leq 0})$, Theorem~\ref{thm:small_conditional_prob} suggests that when noticeable uncertainty arises, one should to be cautious of a potential deviation from the globally optimal response in the generated text.

\paragraph{Remark: Implication of Our Theoretical Analysis}
\looseness=-1
Our Theorem~\ref{thm:small_conditional_prob} indicate that, \emph{even if} with an oracle LLM (Assumption~\ref{aspt:oracle_LLM}), there is still suboptimality in the purely autogressive way of decoding.
In other words, even if the LLM itself perfectly decomposes the (conditional) probabilities (which is a far-fetched benefit to assume in practice), there is still no guarantee in obtaining the globally optimal sequence with purely autoregressive decoding.
Our empirical evaluations (Sections~\ref{main:synthetic}--\ref{main:experiments}) do not rely on or employ this assumption.

\begin{figure*}[t]
    \centering
    \begin{subfigure}[b]{0.22\textwidth}
        \centering
        \includegraphics[height=18ex]{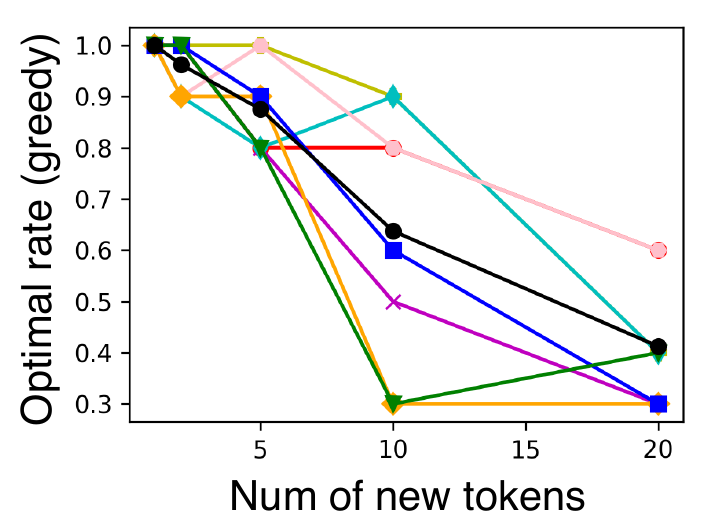}
        \caption{0'th token}
        \label{fig:subfig1}
    \end{subfigure}%
    \begin{subfigure}[b]{0.22\textwidth}
        \centering
        \includegraphics[height=18ex]{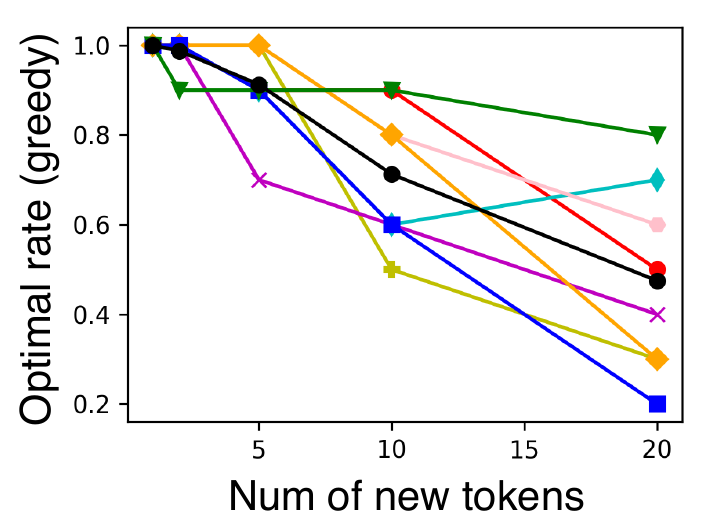}
        \caption{20'th token}
        \label{fig:subfig2}
    \end{subfigure}%
    \begin{subfigure}[b]{0.22\textwidth}
        \centering
        \includegraphics[height=18ex]{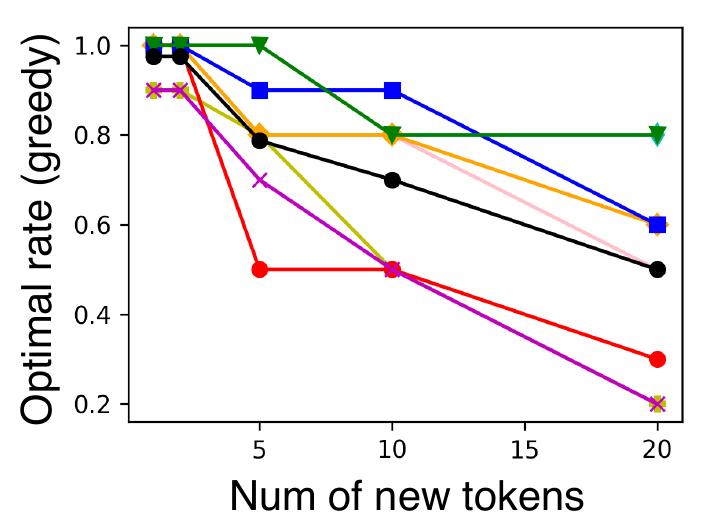}
        \caption{50'th token}
        \label{fig:subfig3}
    \end{subfigure}%
    \begin{subfigure}[b]{0.22\textwidth}
        \centering
        \includegraphics[height=18ex]{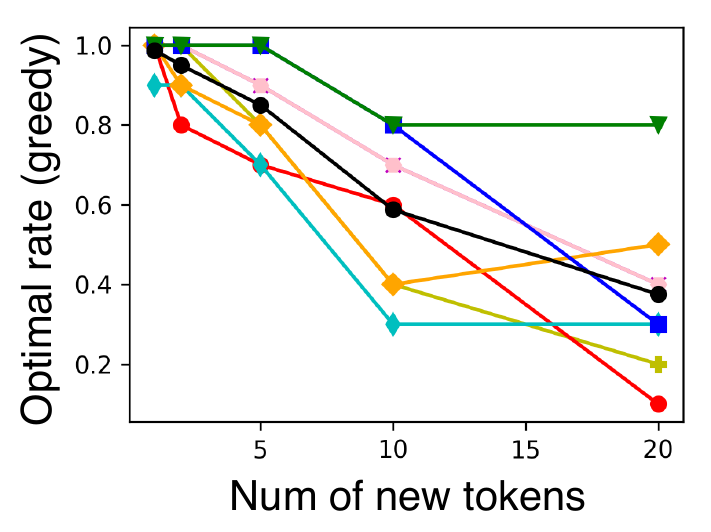}
        \caption{200'th token}
        \label{fig:subfig4}
    \end{subfigure}%
    \begin{subfigure}[b]{0.12\textwidth}
        \centering
        \raisebox{3ex}{\includegraphics[height=18ex]{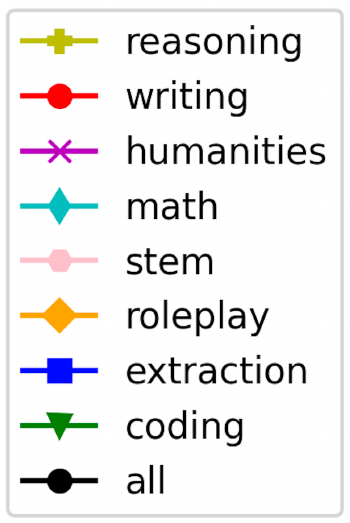}}
        \label{fig:subfig5}
    \end{subfigure}
    \caption{
        \looseness=-1
        The probability that greedy decoding can attain globally optimal response, with respect to the number of newly generated tokens, and with different starting positions in the generation history.
        The legend is shared across sub-figures.
    }
    \label{fig:synthetic_setting}
\end{figure*}

\section{Sanity Check: Semi-Synthetic Settings}\label{main:synthetic}
The implication of our theoretical analysis is straightforward.
However, it is natural to ask whether the phenomenon actually occurs in real-world LLM decoding scenarios.
To provide clear empirical evidence accompanying our theoretical analysis, in this section, we present semi-synthetic experiments that serves as a sanity check.
In particular, in moderately realistic settings, we show that greedy decoding for text generation with stepwise optimization results in suboptimal responses.
We first outline the semi-synthetic setting, and then present the empirical findings.

\paragraph{Illustrative Approximation}
\looseness=-1
For any modern LLM with a vocabulary size $|\Vcal|$ (typically on the order of $10^4$ to $10^5$), identifying the globally optimal sequence across multiple steps becomes computationally intractable, even for relatively short sequence lengths ($<100$).
To ensure the validity of our claim while providing a clear and accessible illustration, we adopt beam search as an approximation strategy of obtaining globally optimal sequence.
Since we measure the chance that greedy decoding can attain the global optimum with the stepwise optimal response, this approximation serves as an upper bound on achievable performance, indicating the discrepancy between greedy decoding and the true globally optimal response.

\paragraph{Approximating Natural Language Scenarios}
Since the prompt or context of the generation influence model behavior, we align our experimental setting with common human-LLM interactions.
Specifically, we utilize MT-Bench \citep{zheng2023judging} questions as curated prompts, which are designed to evaluate conversational chat models.
These samples serve as an approximation of real-world natural language context distributions, ensuring that our findings are grounded in practical scenarios.

\paragraph{Findings}
We use Llama-3.1-8B-Instruct \citep{meta2024llama}, and for each prompt, together with a certain length of generation history ($0$ means only the prompt is given), we evaluate whether the joint probability of the sequence generated with greedy decoding is greater than or equal to that produced by beam search (with beam width set to $10$, as a proxy of the global optimum).
This comparison indicates the extent to which greedy decoding deviates from the globally optimal response.
As illustrated in Figure~\ref{fig:subfig1}, greedy decoding consistently results in suboptimal sequences, and the phenomenon can be observed with a small number of newly generated tokens.

In addition, the potential deviation may behave differently across various positions in the generated text.
For instance, when openings of response diverge, it is hard for greedy decoding to achieve optimal results afterwards.
To reduce potential inductive bias resulting from the diversity at early stages of generation, we evaluate generations starting/continuing from various positions throughout generation history, as presented in Figures~\ref{fig:subfig2}--\ref{fig:subfig4}.
We can observe that the deviation persists across different positions, which empirically demonstrate the common existence of sub-optimality in autoregressive decoding for text generation.

\begin{algorithm}[t]
    \caption{Reflection-Window Decoding with Selective Refinement for Text Generation}
    \label{alg:overall_pipeline}
    \Input{%
        The prompt $X_{\leq 0}$, the reflection window size $d$, as well as hyperparameter-enclosed functions for the pause criterion $\mathrm{IfPause}(\cdot)$ and the refine/correct method $\mathrm{ReGenerate}(\cdot)$.
    }
    \Output{%
        Text generated with selective refinement $\xbf$.
    }

    $t^{\mathrm{(slow)}} \leftarrow 0$, $t^{\mathrm{(fast)}} \leftarrow 1$\tcp*[r]{slow/fast pointer}
    response $\xbf \leftarrow$ empty string\;

    \While{not stopped, or response not completed}{
    regular decoding for the next token $w_{t^{\mathrm{(fast)}}}$\;
    \uIf{$t^{\mathrm{(fast)}} - t^{\mathrm{(slow)}} < d$}{
        $\xbf \leftarrow \big( \xbf_{1 : t^{\mathrm{(fast)}} - 1}, w_{t^{\mathrm{(fast)}}} \big)$\;
    }
    \Else{
    concatenate and get temporary sequence $\widehat{\xbf} \leftarrow \big( \xbf_{1 : t^{\mathrm{(fast)}} - 1}, w_{t^{\mathrm{(fast)}}} \big)$\;
    \uIf{$\texttt{True} = \mathrm{IfPause}\big( \widehat{\xbf}_{t^{\mathrm{(fast)}} - d + 1 : t^{\mathrm{(fast)}}} \big)$}{
    \Comment{Refine upon reflection}
    obtain a length-$d$ replacement text 
    $\xbf^*_{t^{\mathrm{(fast)}} - d + 1 : t^{\mathrm{(fast)}}} \leftarrow \mathrm{ReGenerate} \big( \xbf_{1 : t^{\mathrm{(fast)}} - d} \big)$\;
    $\xbf \leftarrow \big( \xbf_{1 : t^{\mathrm{(fast)}} - d}, \xbf^*_{t^{\mathrm{(fast)}} - d + 1 : t^{\mathrm{(fast)}}} \big)$\;
    $t^{\mathrm{(slow)}} \leftarrow t^{\mathrm{(fast)}}$\tcp*[r]{update slow ptr}
    }
    \Else{
        \Comment{Continue to generate}
        $\xbf \leftarrow \widehat{\xbf}$\;
    }
    }
    $t^{\mathrm{(fast)}} \leftarrow t^{\mathrm{(fast)}} + 1$\tcp*[r]{update fast ptr}
    }
    
    \Return{$\xbf$}.
\end{algorithm}
\section{Empirical Approach and Experiments}\label{main:experiments}
In Sections \ref{main:experiments:technical_approach}--\ref{main:experiments:setting}, we provide technical details about our empirical approach and settings of our experiments.
Then in Sections \ref{main:experiments:experiment_results}--\ref{main:experiments:discussions}, we present experimental results to demonstrate both the effectiveness and efficiency of our method.

\subsection{Reflection-Window Decoding: Technical Details}\label{main:experiments:technical_approach}
Our findings through both the theoretical characterization of sub-optimality in autoregressive decoding for text generation (Section~\ref{main:theory}), and the sanity check with empirical verifications in semi-synthetic settings (Section~\ref{main:synthetic}), suggest the necessity of a built-in reflection-and-refine mechanism at the decoding level.
To empirically address this issue, we propose a selective refinement framework that interchangeably refine and generate as the response unfolds.

Text typically unfolds in a single direction, i.e., from the start to the end, with words, phrases, and sentences.
This differentiates text from other forms of objects that occupy multiple dimensional spaces, e.g., images or videos.
Taking advantage of this one-dimensional nature, our decoding framework integrates a sliding reflection window along with two additional modules: (1) a \emph{pausing criterion} that specifies whether we should pause the generation upon reflecting on generated content, and (2) a \emph{refinement/correction method} that facilitates revision at the decoding level (if the pausing criterion is triggered).
We present the pseudocode of our reflection-window decoding approach in Algorithm~\ref{alg:overall_pipeline}.

\paragraph{Pausing Criterion}
\looseness=-1
Guided by our theoretical characterization (Theorem~\ref{thm:small_conditional_prob}), the reflection at the decoding level needs to capture the (increasing trend of) uncertainty as text generation proceeds.
For an empirical pausing criterion, we use the conditional entropy $H(\cdot)$ based on the next-token logits across the vocabulary.
Specifically, given an LLM which models the conditional distribution $g(X_{t} \mid X_{1:t-1})$ of the token at $t$-th step given all the observed history $X_{1:t-1}= \xbf_{1:t - 1}$, we use the pausing criterion $h(t; \sigma, d)$:
\begin{equation}
    \small
    h(t; \sigma, d) =
    \begin{cases}
        \text{True}  & \text{if }~H\big( X_{t - i} \mid X_{1:(t - i - 1)} \big) > \sigma~, \\
                     & \forall i \in [0,d-1],                                              \\
        \text{False} & \text{Else},
    \end{cases}
\end{equation}
where $\sigma$ denotes the hyperparameter for the threshold of conditional entropy, and $d$ denotes that for the window size (how far we look back in history, in terms of the token counts).

\looseness=-1
When $h(t^{\mathrm{(fast)}}; \sigma, d)$ is True, the pausing criterion (denoted by $\mathrm{IfPause(\cdot)}$ in Algorithm~\ref{alg:overall_pipeline}, with hyperparameter enclosed) is triggered upon reflecting on the most recent $d$ generated tokens, i.e., the length-$d$ reflection window when the fast pointer is at $t^{\mathrm{(fast)}}$.
The two parameters, $\sigma$ and $d$, jointly decide the sensitivity and effective region of the pausing criterion, and we present more discussions in Section~\ref{main:experiments:entropy_window_size}.

\paragraph{Refinement/Correction Method}
When the pausing criterion is triggered, tokens within the current sliding reflection window need to be refined or corrected.
Since beam search can approximate the global optimum relatively well, empirically we introduce a fixed-length beam search to generate a new segment with a length $d$ (denoted by the hyperparameter-enclosed function $\mathrm{ReGenerate(\cdot)}$ in Algorithm~\ref{alg:overall_pipeline}), to replace the content within the reflection window.
After the refinement, the slow pointer $t^{\mathrm{(slow)}}$ catches up with the fast one $t^{\mathrm{(fast)}}$ and the model proceeds with generation while maintaining the sliding reflection window.

\paragraph{Remark: Versatile Decoding Framework}
We would like to note that our reflection-window decoding approach is highly versatile.
While our empirical approach employs a specific pausing criterion and refinement/correction method, practitioners can customize these components by incorporating different functions, namely, $\mathrm{IfPause(\cdot)}$ and $\mathrm{ReGenerate(\cdot)}$ in Algorithm~\ref{alg:overall_pipeline}, to meet their needs.
Our selective refinement framework integrates the sliding reflection window mechanism with these components, enabling simultaneous refinement and generation at the decoding level while retaining the flexibility to incorporate additional strategies, such as those based on high-level model behaviors and/or speculative decoding (Section~\ref{main:motivation:reflection_window}).

\subsection{Experiment Settings}\label{main:experiments:setting}
We provide technical details about settings of our experiments, including models, benchmarks, evaluation metrics, and baseline methods.

\paragraph{LLM Models}
We conduct experiments using multiple models across different families/herds.
Specifically, we use Llama-3.1-8B-Instruct (denoted as Llama3.1-8B), which belongs to the Llama 3.1 herds \citep{meta2024llama}, Phi-3-Medium-128K-Instruct \citep{abdin2024phi} (denoted as Phi3-Medium) with 14 billion parameters, Qwen2.5-14B-Instruct \citep{qwen2} (denoted as Qwen2.5-14B) with 14 billion parameters, Qwen2.5-7B-Instruct (denoted as Qwen2.5-7B) with 7 billion parameters, and Mistral-Nemo-Instruct-2407 \citep{mistralNemo} (denoted as Mistral-Nemo) with 12 billion parameters.

\paragraph{Benchmarks and Evaluation Metrics}
Our experiments are conducted on MMLU \citep{hendrycks2020measuring} and MT-Bench \citep{zheng2023judging}.
MMLU tests model's general knowledge across 57 diverse subjects, e.g., humanities, STEM (Science, Technology, Engineering, and Mathematics), and social sciences, at varying difficulty levels, making it a comprehensive evaluation of model's reasoning performance and factual knowledge.
MT-Bench, on the other hand, provides a fine-grained evaluation through multi-turn conversational tasks, evaluating not just correctness, but also coherence and fluency.

For MMLU, we adopt the macro averaged accuracy metric because the number of questions varies across subjects.
For MT-Bench, for each pair of responses, we prompt the LLM judge (for which we use GPT-4o, \citealt{openai2024gpt4o}) with two responses following the prompting method outlined in \citet{zheng2023judging}.
The LLM judge should return a decision from three options: \texttt{win}, \texttt{lose} or \texttt{tie}.
To avoid the influence from the preference bias, for each pair we prompt the LLM judge twice with responses placed in different sequences.
When a response gets two \texttt{win}'s (\texttt{lose}'s) or one \texttt{win} (\texttt{lose}) plus one \texttt{tie}, we record the response as (not) prevailing.
The rest situations are treated as \texttt{tie}.
We use win rate as the measurement of performance on MT-Bench, calculated by $\text{win rate} \coloneqq \frac{\text{number of wins}}{\text{number of wins}+\text{number of losses}}$.


\paragraph{Baseline Methods}
\looseness=-1
We compare reflection-window decoding with three baseline methods: greedy decoding, vanilla beam search (with a finite beam width), and Top-$k$/Top-$p$ sampling \citep{fan2018hierarchical,holtzman2020curious}.\footnote{
    If there is no limit on the computation and storage, beam search with an unconstrained beam width could yield the globally optimal output through brute force.
    However, in practice, maintaining a full frontier quickly becomes intractable, and a finite beam width is often introduced as a hyperparameter.
}
We consider greedy search as one baseline method since it corresponds closely to our theoretical analysis.
However, since the widely usage of Top-$p$ and Top-$k$ sampling (and often used in combination), we also include it as a baseline approach.
Reflection-window decoding only leverages beam search when necessary, i.e., when $\mathrm{IfPause(\cdot)}$ in Algorithm~\ref{alg:overall_pipeline} is triggered.
For fair comparison, we set the beam size to $4$ in both reflection-window decoding and the vanilla beam search.

\begingroup
\renewcommand{\arraystretch}{1}
\begin{table*}[t]
    \caption{
        Comparison of macro averaged accuracy on MMLU across subject categories with Phi3-Medium}
    \label{tab:mmlu_category_phi3}
    \centering
    \footnotesize
    \newcolumntype{g}{>{\columncolor[rgb]{0.9,0.9,1.0}}C{0.15}}
    \begin{tabular}{L{0.24}C{0.13}C{0.15}C{0.15}C{0.13}g}
        \toprule
        \textbf{Decoding Method}   & \textbf{STEM}    & \textbf{Humanities} & \textbf{Social Sciences} & \textbf{Others}  & \textbf{Macro Average} \\
        \midrule
        Greedy Decoding            & 78.40\%          & 71.92\%             & \textbf{83.91}\%         & \textbf{81.59}\% & 78.14\%                \\
        Beam Search                & 78.62\%          & 68.65\%             & 82.93\%                  & 79.59\%          & 76.44\%                \\
        Reflection-Window (Greedy) & \textbf{79.06}\% & \textbf{71.98}\%    & 83.65\%                  & 81.10\%          & \textbf{78.15}\%       \\
        \bottomrule
    \end{tabular}
\end{table*}
\endgroup



\begin{figure}[t]
    \centering
    \includegraphics[height=21ex]{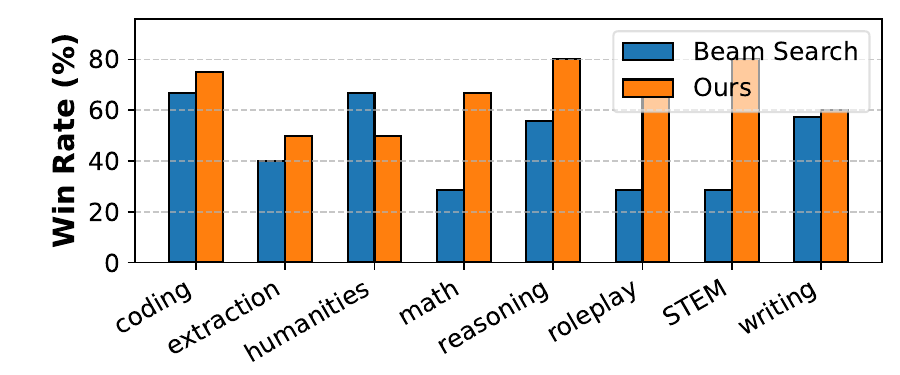}
    \caption{
        Comparison of win rates between beam search and our reflection-window decoding (both against greedy decoding) on MT-Bench across categories}
    \label{fig:winrate_comparison}
\end{figure}

\subsection{Experiment Results}\label{main:experiments:experiment_results}
We compare the performance among greedy decoding, beam search, Top-$k$/Top-$p$ sampling, and our reflection-window decoding.
We use an entropy threshold of $\sigma=0.5$ and a window size of $d=4$ in reflection-window decoding.

\paragraph{MMLU}
\looseness=-1
Table~\ref{tab:mmlu_category_phi3} presents the comparison among greedy decoding, beam search, and reflection-window decoding (with greedy search as the ``regular decoding'' in Algorithm~\ref{alg:overall_pipeline}) on four question categories in MMLU (STEM, humanities, social sciences, and others), using Phi3-Medium as the base LLM.
Our approach demonstrates comparable or superior performance across all categories, achieving the highest average accuracy of $78.15\%$.
The results containing all MMLU subjects can be found in Table~\ref{tab:mmlu_subject_phi3} in Appendix.

\begingroup
\renewcommand{\arraystretch}{1.2}
\begin{table*}[t]
    \caption{Summary of comparison between Top-$k$/Top-$p$ and reflection-window decoding on MMLU (multiple models)}
    \label{tab:summary_mmlu}
    \centering
    \footnotesize
    \newcolumntype{g}{>{\columncolor[rgb]{0.9,0.9,1.0}}C{0.12}}
    \begin{tabular}{L{0.14}L{0.18}C{0.12}C{0.12}C{0.14}C{0.12}g}
        \toprule
        \textbf{Model}                & \ \ \textbf{Decoding Method} & \textbf{STEM}    & \textbf{Humanities} & \textbf{Social Sciences} & \textbf{Others}  & \textbf{Average} \\\midrule
        \multirow{2}{*}{Llama3.1-8B}  & \ \ Top-$k$ Top-$p$          & 65.94\%          & 58.43\%             & 72.60\%                  & 73.06\%          & 66.46\%          \\
                                      & \ \ Reflection-Window        & \textbf{67.21\%} & \textbf{59.43\%}    & \textbf{72.86\%}         & \textbf{73.41\%} & \textbf{67.21\%} \\\midrule
        \multirow{2}{*}{Phi3-Medium} & \ \ Top-$k$ Top-$p$          & 71.36\%          & 65.76\%             & 79.10\%                  & \textbf{74.93\%} & 71.97\%          \\
                                      & \ \ Reflection-Window        & \textbf{73.07\%} & \textbf{71.31\%}    & \textbf{79.85\%}         & 74.86\%          & \textbf{74.73\%} \\\midrule
        \multirow{2}{*}{Qwen2.5-14B}  & \ \ Top-$k$ Top-$p$          & 83.00\%          & 69.86\%             & 82.74\%                  & 79.82\%          & 77.84\%          \\
                                      & \ \ Reflection-Window        & \textbf{83.30\%} & \textbf{70.33\%}    & \textbf{83.85\%}         & \textbf{81.41\%} & \textbf{78.48\%} \\\midrule
        \multirow{2}{*}{Mistral-Nemo} & \ \ Top-$k$ Top-$p$          & 60.74\%          & 50.16\%             & 67.79\%                  & \textbf{65.66\%} & 59.83\%          \\
                                      & \ \ Reflection-Window        & \textbf{61.78\%} & \textbf{51.41\%}    & \textbf{69.22\%}         & 65.27\%          & \textbf{60.71\%} \\
        \bottomrule
    \end{tabular}
\end{table*}
\endgroup

\paragraph{MT-Bench}
\looseness=-1
We choose Llama3.1-8B as the base model in this experiment.
On MT-Bench, the Reflection Window method clearly outperformed both greedy decoding and beam search.
In comparisons with greedy decoding, according to assessments by GPT-4o, reflection window prevails in $66.67\%$ of cases, a win rate significantly higher than that of beam search, which only outperformed greedy decoding in $46.3\%$ of cases.

Figure~\ref{fig:winrate_comparison} shows that our reflection-window decoding significantly outperforms beam search in Roleplay, STEM, Math, and Reasoning tasks.
These tasks require strong logical consistency, and greedy or fixed search strategies often lead to early sub-optimal choices that degrade output quality.
Our approach mitigates this by enabling refinement during generation, making generated contents more coherent.

\begingroup
\renewcommand{\arraystretch}{1.1}
\begin{table}[t]
    \caption{Regeneration metrics on MMLU with Llama3.1-8B}
    \label{tab:regeneration_metric_mmlu}
    \centering
    \footnotesize
    \begin{tabular}{L{0.14}C{0.18}C{0.16}}
        \toprule
        \textbf{Category} & \textbf{ReGen. Ratio (\%)} & \textbf{ReGen. Call} \\
        \midrule
        STEM              & 3.50                      & 3.15                   \\
        Humanities        & 5.04                      & 4.27                   \\
        Social Sciences   & 4.82                      & 3.84                   \\
        Others            & 5.54                      & 4.31                   \\
        \bottomrule
    \end{tabular}
\end{table}
\endgroup


\begingroup
\renewcommand{\arraystretch}{1}
\begin{table}[t]
    \caption{
        Acc. by $\sigma$ (MMLU Social Sciences, Qwen2.5-7B)}
    \label{tab:acc_mmlu_social_science_entropy_threthold}
    \centering
    \footnotesize
    \begin{tabular}{C{0.095}C{0.07}C{0.07}C{0.07}C{0.07}C{0.07}}
        \toprule
        $\sigma$    & \textbf{0.1} & \textbf{0.25}  & \textbf{0.5} & \textbf{0.75} & \textbf{1.0} \\
        \midrule
        Acc. ($\%$) & 79.40        & \textbf{80.31} & 79.82        & 79.88         & 79.69        \\
        \bottomrule
    \end{tabular}
\end{table}
\endgroup
\begingroup
\renewcommand{\arraystretch}{1}
\begin{table}[t]
    \caption{Acc. by $d$ (MMLU Social Sciences, Qwen2.5-7B)}
    \label{tab:acc_mmlu_social_science_window_size}
    \centering
    \footnotesize
    \begin{tabular}{C{0.095}C{0.07}C{0.07}C{0.07}C{0.07}C{0.07}}
        \toprule
        $d$         & \textbf{2} & \textbf{3} & \textbf{4} & \textbf{5} & \textbf{6}     \\
        \midrule
        Acc. ($\%$) & 79.66      & 79.75      & 79.82      & 79.62      & \textbf{79.88} \\
        \bottomrule
    \end{tabular}
\end{table}
\endgroup
\paragraph{Compatibility with Top-$k$/Top-$p$ Sampling}
\looseness=-1
Reflection-window decoding generates tokens autogressively when the pausing criterion is not triggered and only performs selective refinements.
The framework is compatible with Top-$k$/Top-$p$ sampling except in the refinement/correction mode.
In these experiments, we set $k=10$, $p=0.9$, and temperature as 1.0 for both our approach and the baseline Top-$k$/Top-$p$ sampling.
As shown in Table~\ref{tab:summary_mmlu}, our decoding approach consistently outperforms the standard Top-$k$/Top-$p$ approach across all four models.
In particular, the average accuracy improvements range from $0.88$ percentage points (Mistral-Nemo) to $2.76$ percentage points (Phi3-Medium), highlighting the effectiveness of our approach even when stochastic sampling is introduced.
Notably, the largest performance gains are observed in STEM and humanities categories, suggesting that reflection-window decoding is particularly beneficial for reasoning-heavy tasks.
This aligns with observations from the MT-Bench experiments, which also demonstrate that our approach excels in tasks demanding structured logical thinking and complex problem-solving.
We present further results from Table~\ref{tab:decoding_comparison_llama} to Table~\ref{tab:decoding_comparison_qwen} in Appendix.

\subsection{Further Discussions and Analyses}\label{main:experiments:discussions}
\paragraph{Efficiency of Reflection-Window Decoding}
In Table~\ref{tab:regeneration_metric_mmlu}, we aggregate the regeneration statistics on MMLU with Llama3.1-8B.
We record two metrics: (1) the regeneration ratio, which calculates the overall ratio of refined/corrected tokens in the completed response, and (2) the regeneration call, which counts the number of times the pausing criterion is triggered and refinement/correction is needed before finishing any particular response.
We find that regeneration ratio ranges from $3.5\%$ to $5.5\%$ across all categories with a relatively mild pausing criterion.
This suggests that the refinements are usually needed during decoding.
While beam search always maintains a complete frontier of candidate sequences, our reflection-window decoding approach only activates beam search when necessary, and at the sub-sequence level.

\paragraph{Entropy Threshold $\sigma$ and Window Size $d$}\label{main:experiments:entropy_window_size}
We investigate the impact of threshold $\sigma$ on MMLU Social Sciences with Qwen2.5-7B with fixed window size $d=4$.
The results in Table~\ref{tab:acc_mmlu_social_science_entropy_threthold} demonstrate that our method performs robustly across $\sigma$ values ranging from $0.25$ to $0.75$, with $\sigma=0.25$ achieving the highest macro average of $80.31\%$.
While our default setting of $\sigma=0.5$ is not the  best in this specific experiment, it maintains strong performance and shows consistent improvements on most subjects, suggesting that it serves as a reliable default configuration for general usage.

In terms of window size $d$, we also choose social science subjects using Qwen2.5-7B.
The results in Table~\ref{tab:acc_mmlu_social_science_window_size} show that our method maintains strong performance across various window sizes ($d=2$ to $d=6$), with overall macro averages consistently around $79.7\%$.
While $d=6$ achieves the highest macro average, $d=4$ demonstrates comparable performance and maintains better computational efficiency.
These results further support our choice of $d=4$ as a robust default setting, offering a good balance between performance and efficiency across different models and tasks.
We provide additional results on MT-Bench in Table~\ref{tab:mistral_mtbench_per_d} and Table~\ref{tab:llama31_mtbench_per_d}, and on MMLU in Table~\ref{tab:qwen_mmlu_social_science_per_d} and Table~\ref{tab:qwen_mmlu_social_science_per_sigma} in Appendix.

\section{Conclusion}
\looseness=-1
In this paper, we theoretically characterize one inherent shortcoming, among others, of the autoregressive decoding for text generation in LLMs.
In particular, we show that even when the optimality is defined in terms of the joint probability over all generated tokens, an oracle LLM can still potentially deviate from the globally optimal response of the same length.
To mitigate the sub-optimality of the autoregressive way of text generation, we propose an empirical approach guided by our theoretical characterization.
We incorporate a sliding reflection window and a pausing criterion so that refinement and generation can be performed interchangeably.
Our experimental results demonstrate that our reflection-window decoding strategy achieves significant improvement over regular decoding strategies in inference-intensive settings and maintains performance that is comparable, or even superior to, beam search while being more efficient.

\section*{Acknowledgments}  
\looseness=-1
We would also like to acknowledge the support from NSF Award No. 2229881, AI Institute for Societal Decision Making (AI-SDM), the National Institutes of Health (NIH) under Contract R01HL159805, and grants from Quris AI, Florin Court Capital, and MBZUAI-WIS Joint Program.
ZT is supported by the National Institute of Justice (NIJ) Graduate Research Fellowship, Award No. 15PNIJ-24-GG-01565-RESS.

\section*{Impact Statement}\label{sec:broader_impact}
\looseness=-1
In this paper, we theoretically characterize and empirically address the sub-optimality of the autoregressive decoding for text generation.
In particular, we propose a selective refinement framework and implement it with a sliding reflection window mechanism, enabling interchangeable refinement and generation as the decoding proceeds.
Our approach strikes a balance between efficiency and optimality.
There are many potential societal consequences of our work, none which we feel must be specifically highlighted here.

{\bibliography{references}}
\bibliographystyle{icml2025}

\newpage
\appendix

\begin{bibunit}
    \onecolumn
    \icmltitle{Supplement to ``Reflection-Window Decoding: \\
        Text Generation with Selective Refinement''}

    \startcontents[supp]
    \renewcommand\contentsname{Table of Contents: Appendix}
    \printcontents[supp]{l}{1}{\section*{\contentsname}\setcounter{tocdepth}{2}}

    \listoftables

    \newpage
    \section{The Proof of Theoretical Result}

\begin{theorem}[Indication of Deviation from the Globally Optimal Length-$T$ Response]\label{supp:thm:small_conditional_prob}
    Given the prompt $X_{\leq 0}$, when an oracle LLM (Assumption~\ref{aspt:oracle_LLM}) generates a stepwise optimal length-$T$ response which is not the globally optimal response with the same length, let $L \leq T$ denote the minimum length of prefix-sequence needed in order for such deviation to manifest itself (Assumptions \ref{aspt:strict_preference} and \ref{aspt:no_return_after_detour}).
    Then, the deviation from the globally optimal response happens at some step $K < L$.
    Furthermore, the conditional probability when generating the token $w_{v_L} \in \Vcal$ is strictly smaller than a positive number, which itself is strictly smaller than $1$, i.e.,
    \begin{equation}
        \begin{split}
             & 1 > \epsilon_L > \max_{w \in \Vcal} g(X_L = w \mid X_{1:L-1} = w_{\widehat{\vbf}_T[1]} w_{\widehat{\vbf}_T[2]} \ldots w_{\widehat{\vbf}_T[L - 1]}, X_{\leq 0}),                                                                                                         \\
             & \text{        where } \epsilon_L = \frac{f(X_{1:L} = w_{\vbf_T^*[1]} w_{\vbf_T^*[2]} \ldots w_{\vbf_T^*[L - 1]} w_{\vbf_T^*[L]} \mid X_{\leq 0})}{f(X_{1:L - 1} = w_{\widehat{\vbf}_T[1]} w_{\widehat{\vbf}_T[2]} \ldots w_{\widehat{\vbf}_T[L - 1]} \mid X_{\leq 0})}.
        \end{split}
    \end{equation}
\end{theorem}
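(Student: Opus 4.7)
The plan is to proceed in three stages: (i) locate the first position $K$ where the stepwise and globally optimal responses diverge and argue $K < L$; (ii) convert the manifestation inequality at length $L$ into an upper bound on the stepwise conditional probability at step $L$; and (iii) establish the strict upper bound $\epsilon_L < 1$ using the minimality of $L$.

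First, I would let $K \coloneqq \min\{i \geq 1 : \widehat{\vbf}_T[i] \neq \vbf_T^*[i]\}$ be the first index of disagreement. Suppose for contradiction that $K \geq L$, so $\widehat{\vbf}_T[1{:}L-1] = \vbf_T^*[1{:}L-1]$. By Definition~\ref{def:stepwise_sequence} the stepwise token at position $L$ is the argmax of $g(X_L \mid \vbf_T^*[1{:}L-1], X_{\leq 0})$, so in particular $g(w_{\widehat{\vbf}_T[L]} \mid \vbf_T^*[1{:}L-1], X_{\leq 0}) \geq g(w_{\vbf_T^*[L]} \mid \vbf_T^*[1{:}L-1], X_{\leq 0})$. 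Multiplying both sides by the shared prefix probability (Assumption~\ref{aspt:oracle_LLM}) yields $f(\widehat{\vbf}_T[1{:}L] \mid X_{\leq 0}) \geq f(\vbf_T^*[1{:}L] \mid X_{\leq 0})$, contradicting the assumed manifestation of deviation at length $L$. Hence $K < L$, establishing the first claim.

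Second, Assumption~\ref{aspt:oracle_LLM} gives the chain-rule decomposition $f(\widehat{\vbf}_T[1{:}L] \mid X_{\leq 0}) = f(\widehat{\vbf}_T[1{:}L-1] \mid X_{\leq 0}) \cdot g(w_{\widehat{\vbf}_T[L]} \mid \widehat{\vbf}_T[1{:}L-1], X_{\leq 0})$, and by Definition~\ref{def:stepwise_sequence} the trailing conditional equals $\max_{w \in \Vcal} g(X_L = w \mid \widehat{\vbf}_T[1{:}L-1], X_{\leq 0})$. Using the manifestation inequality $f(\widehat{\vbf}_T[1{:}L] \mid X_{\leq 0}) < f(\vbf_T^*[1{:}L] \mid X_{\leq 0})$ and dividing through by $f(\widehat{\vbf}_T[1{:}L-1] \mid X_{\leq 0})$ produces exactly $\max_{w \in \Vcal} g(X_L = w \mid \widehat{\vbf}_T[1{:}L-1], X_{\leq 0}) < \epsilon_L$, as required.

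The hard part is the final strictness $\epsilon_L < 1$, which I would obtain by combining the minimality of $L$ with Assumption~\ref{aspt:strict_preference}. Minimality of $L$ means no deviation manifests at length $L-1$, so $f(\widehat{\vbf}_T[1{:}L-1] \mid X_{\leq 0}) \geq f(\vbf_T^*[1{:}L-1] \mid X_{\leq 0})$; combined with $\widehat{\vbf}_T[1{:}L-1] \neq \vbf_T^*[1{:}L-1]$ from step~(i) and Assumption~\ref{aspt:strict_preference} applied at length $L-1$, the inequality is strict. Then $\epsilon_L = f(\vbf_T^*[1{:}L] \mid X_{\leq 0}) / f(\widehat{\vbf}_T[1{:}L-1] \mid X_{\leq 0}) < f(\vbf_T^*[1{:}L] \mid X_{\leq 0}) / f(\vbf_T^*[1{:}L-1] \mid X_{\leq 0}) = g(w_{\vbf_T^*[L]} \mid \vbf_T^*[1{:}L-1], X_{\leq 0}) \leq 1$. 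The main obstacle I anticipate is that Assumption~\ref{aspt:strict_preference} is phrased for length-$T$ sequences; I would therefore state up front that the natural reading is that the same injectivity holds at every prefix length, since otherwise an edge case where $g(w_{\vbf_T^*[L]} \mid \vbf_T^*[1{:}L-1], X_{\leq 0}) = 1$ and the prior probabilities tie would need to be ruled out separately.
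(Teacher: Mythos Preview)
Your proposal is correct and follows essentially the same route as the paper's proof: the same contradiction argument for $K < L$ (assuming the prefixes agree up to $L-1$ and invoking the argmax property of the stepwise choice at step $L$), the same chain-rule manipulation to extract $\max_w g < \epsilon_L$, and the same combination of minimality at $L-1$ with Assumption~\ref{aspt:strict_preference} to obtain $\epsilon_L < 1$. Your caveat about Assumption~\ref{aspt:strict_preference} being stated only for length-$T$ sequences is unnecessary, since the paper's assumption is explicitly quantified ``for any given $T > 0$'' and is invoked at prefix lengths in the paper's own proof as well.
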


\begin{proof}
    We first show that the deviation from the globally optimal response happens before step $L$.
    Then, we show that the conditional probability when generating the token $w_{v_L}$ is bounded away from $1$.

    By definition of oracle LLM (Assumption~\ref{aspt:oracle_LLM}), the advantage of the globally optimal response cannot manifest itself at $L = 1$ (even if the deviation happens at step $1$), i.e., $L > 1$.
    Since the minimum length of prefix-sequence needed in order for the deviation of stepwise optimal response from the same-length globally optimal response to manifest is $L$, then the advantage of the globally optimal response is not manifested until step $L$.
    Until step $L - 1$, in terms of the ground-truth conditional probability following the prompt $X_{\leq 0}$, prefix-sequences of the globally optimal response is \emph{not} strictly preferred compared to their same-length counterparts of the stepwise optimal response:
    \begin{equation}\label{equ:before_manifest}
        \small
        \begin{split}
            f(X_1 = w_{\widehat{\vbf}_T[1]} \mid X_{\leq 0})                             & \geq f(X_1 = w_{\vbf_T^*[1]} \mid X_{\leq 0}),                                                  \\
            f(X_{1:2} = w_{\widehat{\vbf}_T[1]} w_{\widehat{\vbf}_T[2]} \mid X_{\leq 0}) & \geq f(X_{1:2} = w_{\vbf_T^*[1]} w_{\vbf_T^*[2]} \mid X_{\leq 0}),                              \\
                                                                                         & \cdots                                                                                          \\
            f(X_{1:L-1} = w_{\widehat{\vbf}_T[1]} w_{\widehat{\vbf}_T[2]} \ldots w_{\widehat{\vbf}_T[L - 1]} \mid X_{\leq 0})
                                                                                         & \geq f(X_{1:L-1} = w_{\vbf_T^*[1]} w_{\vbf_T^*[2]} \ldots w_{\vbf_T^*[L - 1]} \mid X_{\leq 0}).
        \end{split}
    \end{equation}

    Starting from step $L$ and onwards (Assumption~\ref{aspt:no_return_after_detour}), prefix-sequences of the globally optimal response are strictly preferred compared to their counterparts of the stepwise optimal response:
    \begin{equation}\label{equ:after_manifest}
        \small
        \begin{split}
            f(X_{1:L} = w_{\widehat{\vbf}_T[1]} \ldots w_{\widehat{\vbf}_T[L - 1]} w_{\widehat{\vbf}_T[L]} \mid X_{\leq 0})
             & < f(X_{1:L} = w_{\vbf_T^*[1]} \ldots w_{\vbf_T^*[L - 1]} w_{\vbf_T^*[L]} \mid X_{\leq 0}), \\
             & \cdots                                                                                     \\
            f(X_{1:T} = w_{\widehat{\vbf}_T[1]} w_{\widehat{\vbf}_T[2]} \ldots w_{\widehat{\vbf}_T[T]} \mid X_{\leq 0})
             & < f(X_{1:T} = w_{\vbf_T^*[1]} w_{\vbf_T^*[2]} \ldots w_{\vbf_T^*[T]} \mid X_{\leq 0}).
        \end{split}
    \end{equation}

    Assumption~\ref{aspt:strict_preference} specifies that for any two same-length but different sequences following the prompt $X_{\leq 0}$, there is a strict ordering between them.
    Then, in order for the advantage of the globally optimal length-$T$ response to manifest, in terms of strict preferences staring from the length-$L$ prefix-sequence (\eqnref{equ:after_manifest}), there is \emph{at least one} strict preference of the prefix-sequence of stepwise optimal response over its globally optimal counterpart before step $L$.
    In other words, there is at least one step $K \in [1, L-1]$ such that a strict preference (``$>$'' instead of ``$\geq$'') is present in \eqnref{equ:before_manifest}:
    \begin{equation}\label{equ:step_of_strict_preference}
        \small
        \begin{split}
            f(X_{1:K} = w_{\widehat{\vbf}_T[1]} w_{\widehat{\vbf}_T[2]} \ldots w_{\widehat{\vbf}_T[K]} \mid X_{\leq 0})
             & > f(X_{1:K} = w_{\vbf_T^*[1]} w_{\vbf_T^*[2]} \ldots w_{\vbf_T^*[K]} \mid X_{\leq 0}).
        \end{split}
    \end{equation}

    In order to see why this is the case, consider the opposite scenario where there is \emph{no} strict preference in \eqnref{equ:before_manifest}.
    Under Assumption~\ref{aspt:strict_preference}, the comparison between prefix-sequences is either strict preference (they are different) or exactly the same (identical sequences).
    If there is no strict preference in \eqnref{equ:before_manifest}, then for all $t \in [1, L-1]$, $w_{\widehat{\vbf}_T[t]} = w_{\vbf_T^*[t]}$, i.e., the first $L-1$ tokens in the stepwise optimal response are the length-$(L - 1)$ prefix of the globally optimal response.
    If this is the case, the token generated at step $L$ has to deviate from the globally optimal response (since $L$ is the minimum length for the deviation to manifest) $w_{\widehat{\vbf}_T[L]} \neq w_{\vbf_T^*[L]}$:
    \begin{equation}\label{equ:opposite_preference}
        \small
        \begin{split}
                               & f(X_{1:L} = w_{\widehat{\vbf}_T[1]} \ldots w_{\widehat{\vbf}_T[L - 1]} w_{\widehat{\vbf}_T[L]} \mid X_{\leq 0})          \\
            \overset{(i)}{=}   & g(X_L = w_{\widehat{\vbf}_T[L]} \mid X_{1:L-1} = w_{\widehat{\vbf}_T[1]} \ldots w_{\widehat{\vbf}_T[L - 1]}, X_{\leq 0})
            \cdot f(X_{1:L - 1} = w_{\widehat{\vbf}_T[1]} \ldots w_{\widehat{\vbf}_T[L - 1]} \mid X_{\leq 0})                                             \\
            \overset{(ii)}{=}  & g(X_L = w_{\widehat{\vbf}_T[L]} \mid X_{1:L-1} = w_{\vbf_T^*[1]} \ldots w_{\vbf_T^*[L - 1]}, X_{\leq 0})
            \cdot f(X_{1:L - 1} = w_{\vbf_T^*[1]} \ldots w_{\vbf_T^*[L - 1]} \mid X_{\leq 0})                                                             \\
            \overset{(iii)}{>} & g(X_L = w_{\vbf_T^*[L]} \mid X_{1:L-1} = w_{\vbf_T^*[1]} \ldots w_{\vbf_T^*[L - 1]}, X_{\leq 0})
            \cdot f(X_{1:L - 1} = w_{\vbf_T^*[1]} \ldots w_{\vbf_T^*[L - 1]} \mid X_{\leq 0})                                                             \\
            \overset{(iv)}{=}  & f(X_{1:L} = w_{\vbf_T^*[1]} \ldots w_{\vbf_T^*[L - 1]} w_{\vbf_T^*[L]} \mid X_{\leq 0}),
        \end{split}
    \end{equation}
    where $(i)$ and $(iv)$ follow Assumption~\ref{aspt:oracle_LLM}, $(ii)$ corresponds to the setting in this opposite scenario, and $(iii)$ follows Definition~\ref{def:stepwise_sequence} and that $w_{\widehat{\vbf}_T[L]} \neq w_{\vbf_T^*[L]}$.
    This preference relation in \eqnref{equ:opposite_preference} contradicts with that in \eqnref{equ:after_manifest}, and therefore, \eqnref{equ:step_of_strict_preference} has to hold true.

    Therefore, when the advantage of the globally optimal response does not manifest itself until step $L$, the stepwise optimal response deviates from the globally optimal counterpart at some step $K < L$, and that under Assumption~\ref{aspt:strict_preference}, the following strict preference relations hold true:
    \begin{equation}\label{equ:before_manifest_strict}
        \small
        \begin{split}
            f(X_{1:K} = w_{\widehat{\vbf}_T[1]} w_{\widehat{\vbf}_T[2]} \ldots w_{\widehat{\vbf}_T[K]} \mid X_{\leq 0})
             & > f(X_{1:K} = w_{\vbf_T^*[1]} w_{\vbf_T^*[2]} \ldots w_{\vbf_T^*[K]} \mid X_{\leq 0}),         \\
             & \cdots                                                                                         \\
            f(X_{1:L - 1} = w_{\widehat{\vbf}_T[1]} w_{\widehat{\vbf}_T[2]} \ldots w_{\widehat{\vbf}_T[L - 1]} \mid X_{\leq 0})
             & > f(X_{1:L - 1} = w_{\vbf_T^*[1]} w_{\vbf_T^*[2]} \ldots w_{\vbf_T^*[L - 1]} \mid X_{\leq 0}).
        \end{split}
    \end{equation}

    This, together with \eqnref{equ:after_manifest} and Assumption~\ref{aspt:oracle_LLM}, indicates that:
    \begin{equation}
        \small
        \begin{split}
                               & g(X_L = w_{\widehat{\vbf}_T[L]} \mid X_{1:L-1} = w_{\widehat{\vbf}_T[1]} \ldots w_{\widehat{\vbf}_T[L - 1]}, X_{\leq 0})                                                                                            \\
            \overset{(i)}{=}   & \frac{f(X_{1:L} = w_{\widehat{\vbf}_T[1]} \ldots w_{\widehat{\vbf}_T[L - 1]} w_{\widehat{\vbf}_T[L]} \mid X_{\leq 0})}{f(X_{1:L - 1} = w_{\widehat{\vbf}_T[1]} \ldots w_{\widehat{\vbf}_T[L - 1]} \mid X_{\leq 0})} \\
            \overset{(ii)}{<}  & \frac{f(X_{1:L} = w_{\vbf_T^*[1]} \ldots w_{\vbf_T^*[L - 1]} w_{\vbf_T^*[L]} \mid X_{\leq 0})}{f(X_{1:L - 1} = w_{\widehat{\vbf}_T[1]} \ldots w_{\widehat{\vbf}_T[L - 1]} \mid X_{\leq 0})} = \epsilon_L            \\
            \overset{(iii)}{<} & \frac{f(X_{1:L} = w_{\vbf_T^*[1]} \ldots w_{\vbf_T^*[L - 1]} w_{\vbf_T^*[L]} \mid X_{\leq 0})}{f(X_{1:L - 1} = w_{\vbf_T^*[1]} w_{\vbf_T^*[2]} \ldots w_{\vbf_T^*[L - 1]} \mid X_{\leq 0})}                         \\
            \overset{(iv)}{=}  & g(X_L = w_{\vbf_T^*[L]} \mid X_{1:L-1} = w_{\vbf_T^*[1]} \ldots w_{\vbf_T^*[L - 1]}, X_{\leq 0}) \leq 1,
        \end{split}
    \end{equation}
    where $(i)$ and $(iv)$ follow Assumption~\ref{aspt:oracle_LLM}, $(ii)$ follows \eqnref{equ:after_manifest}, and $(iii)$ follows \eqnref{equ:before_manifest_strict}.

    Therefore, the conditional probability of generating any $w_{v_L}$ is strictly smaller than a positive number $\epsilon_L$, which is further strictly smaller than a positive number upper-bounded by $1$.
\end{proof}

    \section{Additional Results and Analyses}
In this section, we present additional results and further discussions on influences from hyperparameters.
We also provide concrete examples demonstrating the process and overall performance of our reflection-window decoding.

\subsection{Analysis on Window Size $d$}
We conduct extensive experiments on MT-Bench to analyze the impact of window size using both Mistral-Nemo (Table~\ref{tab:mistral_mtbench_per_d}) and Llama3.1-8B (Table~\ref{tab:llama31_mtbench_per_d}), with a fixed pausing criterion with $\sigma=0.5$.
These GPT-4o evaluator scores on MT-Bench provide additional evidence that our approach consistently outperforms traditional decoding methods.

For Mistral-Nemo, the optimal performance is achieved at $d=3$ with an overall score of 7.93, surpassing both greedy decoding and beam search.
For Llama3.1-8B, our method consistently outperforms both greedy decoding and beam search across different window sizes, with $d=5$ achieving the best overall performance.
While $d=4$ may not always yield the best result, it demonstrates robust performance across both models and serves as a reliable default setting.

We further evaluate how different window sizes affect the performance on MMLU social science tasks using Qwen2.5-7B.
The results are presented in Table~\ref{tab:qwen_mmlu_social_science_per_d}.

\begin{table*}[t]
    \centering
    \caption{Performance across different window sizes $d$ on MT-Bench with Mistral-Nemo}
    \label{tab:mistral_mtbench_per_d}
    \begin{tabular}{lccc}
        \toprule
        \textbf{Decoding Method}  & \textbf{Rating 1} & \textbf{Rating 2} & \textbf{Overall Mean} \\
        \midrule
        Reflection-Window ($d=2$) & 8.38              & 7.28              & 7.82                  \\
        Reflection-Window ($d=3$) & \textbf{8.44}     & 7.42              & \textbf{7.93}         \\
        Reflection-Window ($d=4$) & 8.28              & 7.41              & 7.84                  \\
        Greedy Decoding           & 8.38              & 7.29              & 7.83                  \\
        Beam Search               & 8.32              & \textbf{7.49}     & 7.91                  \\
        \bottomrule
    \end{tabular}
\end{table*}

\begin{table*}[t]
    \centering
    \caption{Performance across different window sizes $d$ on MT-Bench with Llama3.1-8B}
    \label{tab:llama31_mtbench_per_d}
    \begin{tabular}{lccc}
        \toprule
        \textbf{Decoding Method}  & \textbf{Rating 1} & \textbf{Rating 2} & \textbf{Overall Mean} \\
        \midrule
        Reflection-Window ($d=2$) & 8.29              & 7.09              & 7.69                  \\
        Reflection-Window ($d=3$) & 8.35              & 7.51              & 7.93                  \\
        Reflection-Window ($d=4$) & \textbf{8.36}     & 7.42              & 7.89                  \\
        Reflection-Window ($d=5$) & 8.31              & \textbf{7.62}     & \textbf{7.97}         \\
        Greedy Decoding           & 8.28              & 7.49              & 7.88                  \\
        Beam Search               & 8.07              & 7.19              & 7.63                  \\
        \bottomrule
    \end{tabular}
\end{table*}

\begin{table*}[h]
    \centering
    \caption{Accuracy ($\%$) across different window sizes $d$ on MMLU Social Sciences with Qwen2.5-7B}
    \label{tab:qwen_mmlu_social_science_per_d}
    \begin{tabular}{lccccc}
        \toprule
        \textbf{Subject}                    & $d=2$          & $d=3$          & $d=4$          & $d=5$          & $d=6$          \\
        \midrule
        Econometrics                        & 62.28          & 62.28          & \textbf{64.91} & 64.04          & \textbf{64.91} \\
        High School Geography               & 85.86          & 84.34          & 86.36          & \textbf{87.37} & 85.86          \\
        High School Government and Politics & \textbf{93.26} & \textbf{93.26} & 92.23          & 91.19          & 92.23          \\
        High School Macroeconomics          & 75.90          & \textbf{76.15} & 75.13          & \textbf{76.15} & 75.64          \\
        High School Microeconomics          & 83.61          & \textbf{84.03} & 83.61          & 82.77          & 83.19          \\
        High School Psychology              & 87.89          & 88.07          & 88.07          & 88.07          & \textbf{88.26} \\
        Human Sexuality                     & 77.86          & 75.57          & 78.63          & \textbf{79.39} & 77.86          \\
        Professional Psychology             & \textbf{73.86} & 73.37          & 73.20          & 72.88          & 73.37          \\
        Public Relations                    & 68.18          & \textbf{70.00} & \textbf{70.00} & 65.45          & 68.18          \\
        Sociology                           & 71.02          & 73.06          & \textbf{73.47} & 72.24          & 73.06          \\
        Security Studies                    & 83.08          & 83.58          & 83.08          & 84.08          & \textbf{84.58} \\
        US Foreign Policy                   & 86.00          & 86.00          & 86.00          & 86.00          & \textbf{88.00} \\
        \midrule
        Macro Average                       & 79.66          & 79.75          & 79.82          & 79.62          & \textbf{79.88} \\
        \bottomrule
    \end{tabular}
\end{table*}

\clearpage
\subsection{Analysis on Threshold $\sigma$}
We investigate the impact of threshold $\sigma$ on MMLU social science subjects using Qwen2.5-7B with a fixed window size $d=4$.
The detailed results are presented in Table~\ref{tab:qwen_mmlu_social_science_per_sigma}.


\begin{table*}[h]
    \centering
    \caption{Accuracy ($\%$) across different entropy thresholds $\sigma$ on MMLU Social Sciences with Qwen2.5-7B}
    \label{tab:qwen_mmlu_social_science_per_sigma}
    \begin{tabular}{lccccc}
        \toprule
        \textbf{Subject}                    & $\sigma=0.1$   & $\sigma=0.25$  & $\sigma=0.5$   & $\sigma=0.75$  & $\sigma=1.0$   \\
        \midrule
        Econometrics                        & 62.28          & \textbf{64.91} & \textbf{64.91} & \textbf{64.91} & \textbf{64.91} \\
        High School Geography               & \textbf{92.23} & 91.71          & \textbf{92.23} & \textbf{92.23} & 91.19          \\
        High School Government and Politics & \textbf{92.23} & 91.71          & \textbf{92.23} & \textbf{92.23} & 91.19          \\
        High School Macroeconomics          & 75.13          & \textbf{76.67} & 75.13          & 75.90          & 75.90          \\
        High School Microeconomics          & \textbf{84.45} & \textbf{84.45} & 83.61          & 83.19          & 83.61          \\
        High School Psychology              & 87.52          & \textbf{88.44} & 88.07          & 88.26          & 88.07          \\
        Human Sexuality                     & 74.05          & 77.86          & \textbf{78.63} & 77.10          & 77.10          \\
        Professional Psychology             & 73.20          & \textbf{74.35} & 73.20          & 73.20          & 73.04          \\
        Public Relations                    & 69.09          & \textbf{70.00} & \textbf{70.00} & 67.27          & 67.27          \\
        Sociology                           & 84.58          & \textbf{85.07} & 83.08          & 84.58          & 84.58          \\
        Security Studies                    & 72.24          & 71.43          & \textbf{73.47} & 72.24          & 72.24          \\
        US Foreign Policy                   & \textbf{86.00} & 85.00          & 85.00          & 85.00          & \textbf{87.00} \\
        \midrule
        Macro Average                       & 79.40          & \textbf{80.31} & 79.82          & 79.88          & 79.69          \\
        \bottomrule
    \end{tabular}
\end{table*}

\subsection{Analysis on Regeneration Ratio}
To further understand the computational efficiency of our method, we analyze the regeneration ratio under different window size settings.
We select six college-level subject categories from the MMLU test set (including biology, chemistry, computer science, mathematics, medicine, and physics) for analysis, and conduct experiments with the Llama3.1-8B model with a threshold of $\sigma=0.5$.
We consider the window size $d$ as the key hyperparameter, because it directly influences the regeneration ratio, which is calculate by the product of the times criterion get triggered (the regeneration call) and the window size ($d$), divided by the total length of final response.

As shown in Table \ref{tab:regeneration_ratio} and Figure~\ref{fig:regeneration_ratio}, as the window size increases from 2 to 4, the average regeneration ratio shows a clear downward trend, decreasing from $9.60\%$ to $3.70\%$.
The trend indicates that larger window sizes lead to a faster decrease in the number of modifications needed.
Notably, across all settings, the regeneration ratio remains below $15\%$, suggesting that our method maintains comparable computational workload as greedy decoding for the majority of the time.
These results demonstrate the efficiency of our approach, since it only invoke beam search to find optimal approximations for sub-sequences when necessary, while maintaining the overall efficiency comparable to greedy decoding.

\begin{table}[h]
    \centering
    \caption{Average regeneration ratio by window sizes $d$ on MMLU college-level subjects with Llama3.1-8B}
    \label{tab:regeneration_ratio}
    \begin{tabular}{lccc}
        \toprule
        Window Size $d$                   & 2    & 3    & 4             \\
        \midrule
        Average Regeneration Ratio ($\%$) & 9.60 & 6.02 & \textbf{3.70} \\
        \bottomrule
    \end{tabular}
\end{table}
\begin{figure}[h]
    \centering
    \includegraphics[height=35ex]{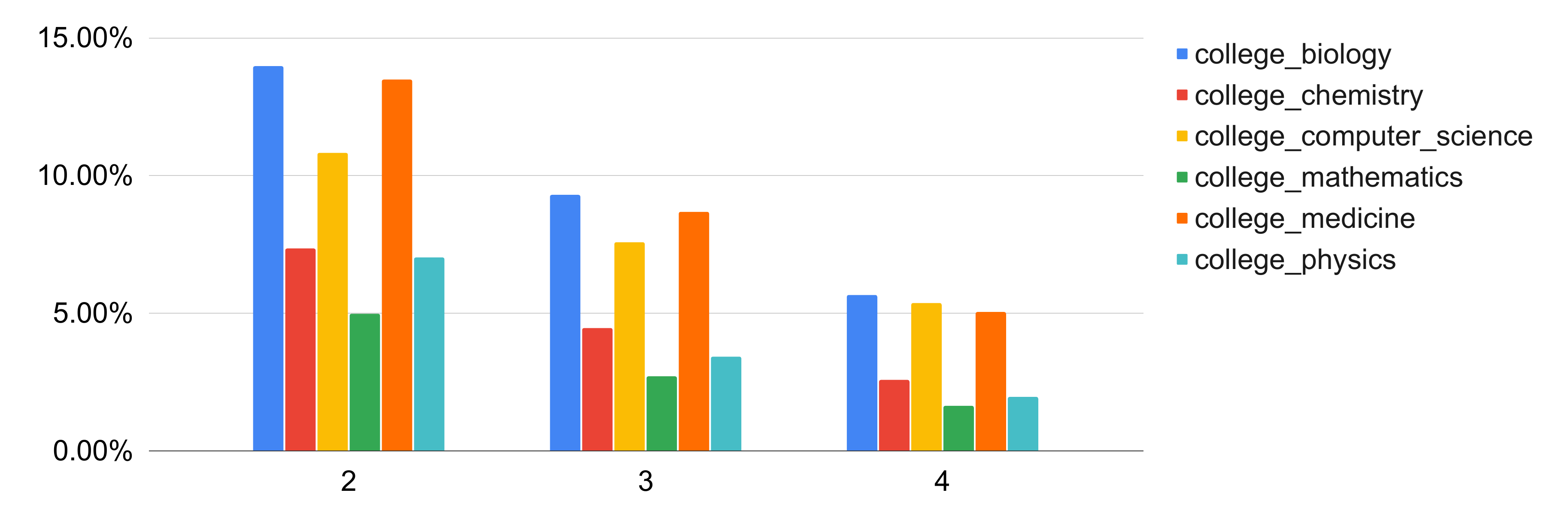}
    \caption{Regeneration ratio on MMLU college-level subjects with Llama3.1-8B}
    \label{fig:regeneration_ratio}
\end{figure}

\newpage
\subsection{Demonstrative Examples on MT-Bench}
In this subsection, we present three examples selected from MT-Bench to illustrate the generation process in our reflection-window decoding.
The first example is the 28th test sample from MT-Bench.
As is shown in Figure~\ref{fig:demo1}, the pausing criterion gets triggered when \textit{``part of the car"} was generated.
The part gets refined and replaced by \textit{``specific part, whereas"} which leads to a more flexible and natural answer.
Figure~\ref{fig:demo2} and Figure~\ref{fig:demo3} are selected from the 29th and 7th test sample of MT-Bench, respectively.


\begin{figure}[h]
    \centering

    \begin{tcolorbox}[decoding, colback=questiongray, title=Question]
        \textbf{Which word does not belong with the others?}
        tyre, steering wheel, car, engine
    \end{tcolorbox}

    \vspace{2ex}

    \begin{tcolorbox}[decoding, colback=reflectiongreen, title=Reflection Window]
        The word that does not belong with the others is ``tyre''.

        The other three words--``steering wheel'', ``car'', and ``engine''--are all related to a vehicle, specifically a car.

        A ``tyre'', on the other hand, is a component of a vehicle, but it's a \textbf{\sout{part of the car}} \textcolor{red}{\textbf{specific part, whereas}} the other three words are more general terms related to the vehicle as a whole.

        However, if I had to choose one word that stands out as not belonging, I would say ``steering wheel''. The other three words--``car'', ``engine'', and ``tyre''--are all nouns that refer to parts or aspects of a vehicle, whereas ``steering wheel'' is a specific component of a vehicle.

        But if I had to choose one word that is the least related to the others, I would say ``car''. The other three words--``steering wheel'', ``engine'', and ``tyre''--are all components or parts of a vehicle, whereas ``car'' is the vehicle itself.

        So, it really depends on how you look at it!
    \end{tcolorbox}

    \vspace{2ex}

    \begin{tcolorbox}[decoding, colback=greedyblue, title=Greedy Decoding]
        The word that does not belong with the others is ``tyre''.

        The other three words--``steering wheel'', ``car'', and ``engine''--are all related to a vehicle, specifically a car.

        A ``tyre'', on the other hand, is a component of a vehicle, but it's a \hlyellow{part of the car}, rather than the car itself.
    \end{tcolorbox}

    \vspace{2ex}

    \begin{tcolorbox}[decoding, colback=beamyellow, title=Beam Search]
        The word that does not belong with the others is ``tyre''.

        The other three words--``steering wheel'', ``car'', and ``engine''--are all related to a vehicle, specifically a car.

        A ``tyre'', on the other hand, is a component of a vehicle, but it's not a vehicle itself.
    \end{tcolorbox}

    \caption{Test Sample \#28}
    \label{fig:demo1}
\end{figure}

\begin{figure}[h]
    \centering

    \begin{tcolorbox}[decoding, colback=questiongray, title=Question]
        \textbf{Could you write a captivating short story beginning with the sentence:} \\ \textit{The old abandoned house at the end of the street held a secret that no one had ever discovered.}
    \end{tcolorbox}

    \vspace{2ex}

    \begin{tcolorbox}[decoding, colback=reflectiongreen, title=Reflection Window]
        Here's a captivating short story for you:

        The old abandoned house at the end of the street held a secret that no one had ever discovered\ldots

        \ldots(omitted)

        Suddenly, Emily heard a faint whispering in her ear. It was a soft, raspy voice that seemed to be \textbf{\sout{speaking an ancient}} \textcolor{red}{\textbf{coming from all around}} her. ``Welcome, Emily,'' it said. ``I've been waiting for you.''

        \ldots(omitted)

        From that day on, Emily returned to the old house again and again, learning the secrets of the nexus point and the magic that lay within. And as she did, she discovered that the house was not just a place -- it was a doorway to a whole new world of wonder and possibility.
    \end{tcolorbox}

    \vspace{2ex}

    \begin{tcolorbox}[decoding, colback=greedyblue, title=Greedy Decoding]
        Here's a captivating short story for you:

        The old abandoned house at the end of the street held a secret that no one had ever discovered\ldots

        \ldots

        Suddenly, Emily heard a faint whispering in her ear. It was a soft, raspy voice that seemed to be \hlyellow{speaking an ancient} language she couldn't quite understand. But as she listened more closely, she began to pick out words -- words that seemed to be calling to her, drawing her deeper into the heart of the house.

        \ldots(omitted)

        But as she looked up, she saw that she was no longer alone. A figure stood in the doorway, watching her with eyes that seemed to hold a deep sadness and a hint of longing. And as Emily looked into those eyes, she knew that she had stumbled upon a secret that was not just about the house, but about herself -- a secret that would change her life forever.
    \end{tcolorbox}

    \vspace{2ex}

    \begin{tcolorbox}[decoding, colback=beamyellow, title=Beam Search]
        Here's a captivating short story for you:

        The old abandoned house at the end of the street held a secret that no one had ever discovered. For as long as anyone could remember, the house had stood vacant, its grandeur and beauty slowly being consumed by the passing of time.

        \ldots (omitted)

        It was then that Emily realized that she was not just a curious adventurer, but a key player in a much larger game. The old abandoned house at the end of the street was not just a secret, but a doorway to a world of wonder and discovery, and Emily was about to embark on a journey that would change her life forever.
    \end{tcolorbox}

    \caption{Test Sample \#29}
    \label{fig:demo2}
\end{figure}

\begin{figure}[t]
    \centering

    \begin{tcolorbox}[decoding, colback=questiongray, title=Question]
        One morning after sunrise, Suresh was standing facing a pole. The shadow of the pole fell exactly to his right. Can you tell me the direction towards which the shadow was pointing--east, south, west, or north? \ldots

        \textbf{To which direction was Suresh facing? How do you solve this?}
    \end{tcolorbox}

    \vspace{2ex}

    \begin{tcolorbox}[decoding, colback=reflectiongreen, title=Reflection Window]
        To determine the direction Suresh was facing, we need to consider the position of the sun and the direction of the shadow.

        Since the \textbf{\sout{shadow of the pole}} \textcolor{red}{\textbf{sun is in the east}} and its rays cast shadows to the west, and the shadow of the pole fell to Suresh's right, we can infer that Suresh was facing \textbf{south}. This is because, when the sun is in the east, the shadows fall to the west, and if the shadow falls to the right, it means Suresh was facing south.
    \end{tcolorbox}

    \vspace{2ex}

    \begin{tcolorbox}[decoding, colback=greedyblue, title=Greedy Decoding]
        To determine the direction in which Suresh was facing, we need to consider the position of the sun and the direction of the shadow.

        Since the \hlyellow{shadow of the pole} fell to Suresh's right, we can infer that the sun is in the east. This is because the sun's rays are coming from the east, and the shadows are cast in the opposite direction.

        Now, since the shadow fell to Suresh's right, we can conclude that Suresh was facing the \textbf{west}. This is because the shadow fell to his right, which means he was facing the opposite direction, i.e., the west.

        Therefore, the direction in which Suresh was facing is \textbf{west}.
    \end{tcolorbox}

    \vspace{2ex}

    \begin{tcolorbox}[decoding, colback=beamyellow, title=Beam Search]
        Since Suresh was standing facing a pole and the shadow of the pole fell exactly to his right, we can deduce the direction of the shadow based on the position of the sun at sunrise.

        1. At sunrise, the sun rises in the east.

        2. Since Suresh was facing the pole and the shadow of the pole fell to his right, it means the shadow was pointing in a direction perpendicular to his line of sight.

        3. Given that the sun rises in the east, the shadow of the pole would fall to the right of Suresh's line of sight, which means the shadow would be pointing \textbf{west}.

        Therefore, the direction towards which the shadow of the pole was pointing is \textbf{west}.
    \end{tcolorbox}

    \caption{Test Sample \#7}
    \label{fig:demo3}
\end{figure}
    \clearpage
\section{Performance Across All MMLU Subjects}
Due to the size of the table, the material is arranged in the one-table-per-page manner (starting from the next page).

\begingroup
\renewcommand{\arraystretch}{1}
\begin{table}[t]
    \caption{Comparison among greedy decoding, beam search, and reflection-window decoding on MMLU with Phi3-Medium}
    \label{tab:mmlu_subject_phi3}
    \centering
    \footnotesize
    \begin{tabular}{L{0.35}C{0.15}C{0.15}C{0.15}}
        \toprule
        \textbf{Subject}                        & \textbf{Greedy Decoding} & \textbf{Beam Search} & \textbf{Reflection-Window} \\
        \midrule
        abstract\_algebra                       & 58.00                    & 56.00                & 59.00                      \\
        anatomy                                 & 73.33                    & 74.81                & 72.59                      \\
        astronomy                               & 87.50                    & 84.21                & 88.82                      \\
        business\_ethics                        & 80.00                    & 74.00                & 79.00                      \\
        clinical\_knowledge                     & 84.91                    & 83.40                & 84.53                      \\
        college\_biology                        & 86.81                    & 88.19                & 88.19                      \\
        college\_chemistry                      & 55.00                    & 59.00                & 58.00                      \\
        college\_computer\_science              & 68.00                    & 68.00                & 70.00                      \\
        college\_mathematics                    & 62.00                    & 58.00                & 57.00                      \\
        college\_medicine                       & 78.61                    & 75.14                & 75.72                      \\
        college\_physics                        & 77.45                    & 73.53                & 80.39                      \\
        computer\_security                      & 80.00                    & 75.00                & 79.00                      \\
        conceptual\_physics                     & 80.85                    & 82.98                & 82.55                      \\
        econometrics                            & 59.65                    & 57.89                & 61.40                      \\
        electrical\_engineering                 & 70.34                    & 68.97                & 72.41                      \\
        elementary\_mathematics                 & 94.44                    & 93.92                & 93.92                      \\
        formal\_logic                           & 61.11                    & 53.17                & 61.11                      \\
        global\_facts                           & 62.00                    & 64.00                & 63.00                      \\
        high\_school\_biology                   & 88.71                    & 91.29                & 90.32                      \\
        high\_school\_chemistry                 & 78.33                    & 75.37                & 77.34                      \\
        high\_school\_computer\_science         & 87.00                    & 87.00                & 88.00                      \\
        high\_school\_european\_history         & 80.00                    & 71.52                & 81.21                      \\
        high\_school\_geography                 & 88.89                    & 88.89                & 88.38                      \\
        high\_school\_government\_and\_politics & 95.34                    & 93.78                & 94.82                      \\
        high\_school\_macroeconomics            & 84.10                    & 82.56                & 86.15                      \\
        high\_school\_mathematics               & 74.07                    & 75.93                & 74.07                      \\
        high\_school\_microeconomics            & 89.92                    & 89.92                & 89.08                      \\
        high\_school\_physics                   & 66.89                    & 73.51                & 69.54                      \\
        high\_school\_psychology                & 91.56                    & 91.01                & 91.74                      \\
        high\_school\_statistics                & 78.24                    & 77.78                & 78.70                      \\
        high\_school\_us\_history               & 85.78                    & 77.94                & 85.78                      \\
        high\_school\_world\_history            & 85.23                    & 75.95                & 83.12                      \\
        human\_aging                            & 74.44                    & 73.54                & 71.75                      \\
        human\_sexuality                        & 83.21                    & 81.68                & 79.39                      \\
        international\_law                      & 85.12                    & 85.95                & 81.82                      \\
        jurisprudence                           & 85.19                    & 84.26                & 85.19                      \\
        logical\_fallacies                      & 84.66                    & 85.89                & 85.89                      \\
        machine\_learning                       & 65.18                    & 66.96                & 66.07                      \\
        management                              & 85.44                    & 82.52                & 83.50                      \\
        marketing                               & 89.32                    & 87.18                & 88.46                      \\
        medical\_genetics                       & 86.00                    & 90.00                & 87.00                      \\
        miscellaneous                           & 91.57                    & 91.95                & 91.19                      \\
        moral\_disputes                         & 78.03                    & 75.72                & 78.32                      \\
        moral\_scenarios                        & 74.53                    & 75.53                & 73.85                      \\
        nutrition                               & 82.68                    & 81.70                & 83.01                      \\
        philosophy                              & 75.88                    & 76.21                & 77.17                      \\
        prehistory                              & 84.57                    & 84.88                & 83.95                      \\
        professional\_accounting                & 75.89                    & 74.11                & 75.18                      \\
        professional\_law                       & 56.98                    & 50.72                & 57.43                      \\
        professional\_medicine                  & 77.94                    & 63.60                & 79.04                      \\
        professional\_psychology                & 80.07                    & 78.27                & 79.08                      \\
        public\_relations                       & 68.18                    & 67.27                & 66.36                      \\
        security\_studies                       & 73.06                    & 73.47                & 73.47                      \\
        sociology                               & 87.06                    & 85.57                & 86.07                      \\
        us\_foreign\_policy                     & 85.00                    & 85.00                & 84.00                      \\
        virology                                & 52.41                    & 53.61                & 53.01                      \\
        world\_religions                        & 84.21                    & 83.63                & 86.55                      \\
        \bottomrule
    \end{tabular}
\end{table}
\endgroup
\newpage

\begingroup
\renewcommand{\arraystretch}{1}
\begin{table}[t]
    \caption{Comparison between Top-$k$/Top-$p$ and reflection-window decoding on MMLU with Llama3.1-8B}
    \label{tab:decoding_comparison_llama}
    \centering
    \footnotesize
    \begin{tabular}{L{0.35}C{0.20}C{0.20}}
        \toprule
        \textbf{Subject}                        & \textbf{Top-$k$/Top-$p$} & \textbf{Reflection-Window} \\
        \midrule
        abstract\_algebra                       & 43.00                    & 50.00                      \\
        anatomy                                 & 70.37                    & 68.15                      \\
        astronomy                               & 71.05                    & 75.00                      \\
        business\_ethics                        & 69.00                    & 71.00                      \\
        clinical\_knowledge                     & 75.09                    & 74.72                      \\
        college\_biology                        & 80.56                    & 81.94                      \\
        college\_chemistry                      & 54.00                    & 50.00                      \\
        college\_computer\_science              & 58.00                    & 47.00                      \\
        college\_mathematics                    & 38.00                    & 45.00                      \\
        college\_medicine                       & 68.79                    & 70.52                      \\
        college\_physics                        & 49.02                    & 65.69                      \\
        computer\_security                      & 69.00                    & 73.00                      \\
        conceptual\_physics                     & 64.68                    & 66.81                      \\
        econometrics                            & 47.37                    & 48.25                      \\
        electrical\_engineering                 & 57.24                    & 60.00                      \\
        elementary\_mathematics                 & 88.89                    & 85.71                      \\
        formal\_logic                           & 53.17                    & 46.03                      \\
        global\_facts                           & 48.00                    & 41.00                      \\
        high\_school\_biology                   & 78.71                    & 79.03                      \\
        high\_school\_chemistry                 & 61.08                    & 62.07                      \\
        high\_school\_computer\_science         & 71.00                    & 72.00                      \\
        high\_school\_european\_history         & 72.73                    & 73.94                      \\
        high\_school\_geography                 & 78.79                    & 80.81                      \\
        high\_school\_government\_and\_politics & 84.97                    & 83.42                      \\
        high\_school\_macroeconomics            & 68.97                    & 73.33                      \\
        high\_school\_mathematics               & 60.74                    & 62.59                      \\
        high\_school\_microeconomics            & 70.17                    & 69.33                      \\
        high\_school\_physics                   & 53.64                    & 53.64                      \\
        high\_school\_psychology                & 85.50                    & 85.50                      \\
        high\_school\_statistics                & 63.89                    & 65.74                      \\
        high\_school\_us\_history               & 77.94                    & 78.92                      \\
        high\_school\_world\_history            & 75.11                    & 79.32                      \\
        human\_aging                            & 69.51                    & 64.13                      \\
        human\_sexuality                        & 53.44                    & 54.96                      \\
        international\_law                      & 78.51                    & 74.38                      \\
        jurisprudence                           & 73.15                    & 76.85                      \\
        logical\_fallacies                      & 74.85                    & 79.14                      \\
        machine\_learning                       & 49.11                    & 53.57                      \\
        management                              & 80.58                    & 82.52                      \\
        marketing                               & 86.32                    & 85.47                      \\
        medical\_genetics                       & 75.00                    & 76.00                      \\
        miscellaneous                           & 86.21                    & 86.21                      \\
        moral\_disputes                         & 63.29                    & 59.25                      \\
        moral\_scenarios                        & 44.47                    & 49.72                      \\
        nutrition                               & 72.22                    & 72.88                      \\
        philosophy                              & 69.13                    & 68.81                      \\
        prehistory                              & 71.30                    & 70.06                      \\
        professional\_accounting                & 47.87                    & 53.19                      \\
        professional\_law                       & 47.72                    & 48.24                      \\
        professional\_medicine                  & 76.47                    & 77.94                      \\
        professional\_psychology                & 68.14                    & 68.30                      \\
        public\_relations                       & 69.09                    & 63.64                      \\
        security\_studies                       & 64.49                    & 61.22                      \\
        sociology                               & 77.61                    & 78.61                      \\
        us\_foreign\_policy                     & 81.00                    & 81.00                      \\
        virology                                & 48.80                    & 51.20                      \\
        world\_religions                        & 78.36                    & 78.36                      \\
        \bottomrule
    \end{tabular}
\end{table}
\endgroup

\newpage
\begingroup
\renewcommand{\arraystretch}{1}
\begin{table}[t]
    \caption{Comparison between Top-$k$/Top-$p$ and reflection-window decoding on MMLU with Mistral-Nemo}
    \label{tab:decoding_comparison_mistral}
    \centering
    \footnotesize
    \begin{tabular}{L{0.35}C{0.20}C{0.20}}
        \toprule
        \textbf{Subject}                        & \textbf{Top-$k$/Top-$p$} & \textbf{Reflection-Window} \\
        \midrule
        abstract\_algebra                       & 34.00                    & 41.00                      \\
        anatomy                                 & 60.74                    & 59.26                      \\
        astronomy                               & 67.76                    & 66.45                      \\
        business\_ethics                        & 61.00                    & 56.00                      \\
        clinical\_knowledge                     & 69.81                    & 70.94                      \\
        college\_biology                        & 74.31                    & 77.08                      \\
        college\_chemistry                      & 41.00                    & 48.00                      \\
        college\_computer\_science              & 56.00                    & 54.00                      \\
        college\_mathematics                    & 45.00                    & 39.00                      \\
        college\_medicine                       & 58.96                    & 67.63                      \\
        college\_physics                        & 51.96                    & 52.94                      \\
        computer\_security                      & 64.00                    & 69.00                      \\
        conceptual\_physics                     & 63.40                    & 61.70                      \\
        econometrics                            & 51.75                    & 57.89                      \\
        electrical\_engineering                 & 55.17                    & 53.10                      \\
        elementary\_mathematics                 & 73.81                    & 74.87                      \\
        formal\_logic                           & 42.06                    & 42.86                      \\
        global\_facts                           & 45.00                    & 41.00                      \\
        high\_school\_biology                   & 76.77                    & 75.16                      \\
        high\_school\_chemistry                 & 51.72                    & 57.14                      \\
        high\_school\_computer\_science         & 70.00                    & 73.00                      \\
        high\_school\_european\_history         & 66.67                    & 66.06                      \\
        high\_school\_geography                 & 75.76                    & 70.20                      \\
        high\_school\_government\_and\_politics & 80.83                    & 84.46                      \\
        high\_school\_macroeconomics            & 66.92                    & 69.49                      \\
        high\_school\_mathematics               & 59.63                    & 60.00                      \\
        high\_school\_microeconomics            & 65.97                    & 69.33                      \\
        high\_school\_physics                   & 43.71                    & 45.70                      \\
        high\_school\_psychology                & 78.72                    & 77.80                      \\
        high\_school\_statistics                & 58.80                    & 62.50                      \\
        high\_school\_us\_history               & 64.71                    & 73.04                      \\
        high\_school\_world\_history            & 70.46                    & 73.42                      \\
        human\_aging                            & 61.43                    & 58.30                      \\
        human\_sexuality                        & 61.83                    & 64.89                      \\
        international\_law                      & 68.60                    & 66.94                      \\
        jurisprudence                           & 67.59                    & 64.81                      \\
        logical\_fallacies                      & 69.94                    & 65.64                      \\
        machine\_learning                       & 49.11                    & 51.79                      \\
        management                              & 69.90                    & 69.90                      \\
        marketing                               & 76.50                    & 71.79                      \\
        medical\_genetics                       & 77.00                    & 72.00                      \\
        miscellaneous                           & 79.69                    & 79.57                      \\
        moral\_disputes                         & 58.38                    & 60.12                      \\
        moral\_scenarios                        & 26.15                    & 27.49                      \\
        nutrition                               & 67.32                    & 65.03                      \\
        philosophy                              & 61.41                    & 67.85                      \\
        prehistory                              & 61.11                    & 61.11                      \\
        professional\_accounting                & 46.10                    & 47.16                      \\
        professional\_law                       & 44.39                    & 44.46                      \\
        professional\_medicine                  & 58.46                    & 58.46                      \\
        professional\_psychology                & 61.93                    & 65.69                      \\
        public\_relations                       & 65.45                    & 59.09                      \\
        security\_studies                       & 52.65                    & 55.92                      \\
        sociology                               & 68.66                    & 72.64                      \\
        us\_foreign\_policy                     & 75.00                    & 67.00                      \\
        virology                                & 37.95                    & 42.17                      \\
        world\_religions                        & 71.35                    & 76.02                      \\
        \bottomrule
    \end{tabular}
\end{table}
\endgroup
\begingroup
\renewcommand{\arraystretch}{1}
\begin{table}[t]
    \caption{Comparison between Top-$k$/Top-$p$ and reflection-window decoding on MMLU with Phi3-Medium}
    \label{tab:decoding_comparison_phi3}
    \centering
    \footnotesize
    \begin{tabular}{L{0.35}C{0.20}C{0.20}}
        \toprule
        \textbf{Subject}                        & \textbf{Top-$k$/Top-$p$} & \textbf{Reflection-Window} \\
        \midrule
        abstract\_algebra                       & 52.00                    & 62.00                      \\
        anatomy                                 & 70.37                    & 72.59                      \\
        astronomy                               & 82.89                    & 82.24                      \\
        business\_ethics                        & 72.00                    & 77.00                      \\
        clinical\_knowledge                     & 78.87                    & 79.25                      \\
        college\_biology                        & 82.64                    & 81.25                      \\
        college\_chemistry                      & 51.00                    & 54.00                      \\
        college\_computer\_science              & 61.00                    & 63.00                      \\
        college\_mathematics                    & 50.00                    & 48.00                      \\
        college\_medicine                       & 69.94                    & 72.25                      \\
        college\_physics                        & 71.57                    & 63.73                      \\
        computer\_security                      & 74.00                    & 74.00                      \\
        conceptual\_physics                     & 75.74                    & 74.47                      \\
        econometrics                            & 57.89                    & 59.65                      \\
        electrical\_engineering                 & 60.00                    & 63.45                      \\
        elementary\_mathematics                 & 85.98                    & 85.71                      \\
        formal\_logic                           & 56.35                    & 57.14                      \\
        global\_facts                           & 45.00                    & 46.00                      \\
        high\_school\_biology                   & 84.52                    & 86.45                      \\
        high\_school\_chemistry                 & 74.38                    & 73.40                      \\
        high\_school\_computer\_science         & 81.00                    & 83.00                      \\
        high\_school\_european\_history         & 73.33                    & 75.76                      \\
        high\_school\_geography                 & 81.82                    & 82.83                      \\
        high\_school\_government\_and\_politics & 88.60                    & 91.19                      \\
        high\_school\_macroeconomics            & 80.00                    & 79.74                      \\
        high\_school\_mathematics               & 60.37                    & 63.70                      \\
        high\_school\_microeconomics            & 84.03                    & 87.39                      \\
        high\_school\_physics                   & 58.28                    & 70.20                      \\
        high\_school\_psychology                & 85.50                    & 85.50                      \\
        high\_school\_statistics                & 70.83                    & 72.69                      \\
        high\_school\_us\_history               & 82.84                    & 78.92                      \\
        high\_school\_world\_history            & 81.01                    & 81.86                      \\
        human\_aging                            & 71.30                    & 68.16                      \\
        human\_sexuality                        & 73.28                    & 77.10                      \\
        international\_law                      & 77.69                    & 83.47                      \\
        jurisprudence                           & 83.33                    & 75.00                      \\
        logical\_fallacies                      & 79.14                    & 82.82                      \\
        machine\_learning                       & 54.46                    & 64.29                      \\
        management                              & 76.70                    & 80.58                      \\
        marketing                               & 83.76                    & 81.62                      \\
        medical\_genetics                       & 85.00                    & 87.00                      \\
        miscellaneous                           & 85.31                    & 84.55                      \\
        moral\_disputes                         & 71.97                    & 67.92                      \\
        moral\_scenarios                        & 61.34                    & 63.24                      \\
        nutrition                               & 75.82                    & 78.10                      \\
        philosophy                              & 74.92                    & 71.06                      \\
        prehistory                              & 76.85                    & 78.09                      \\
        professional\_accounting                & 68.09                    & 66.67                      \\
        professional\_law                       & 53.13                    & 50.00                      \\
        professional\_medicine                  & 69.12                    & 67.65                      \\
        professional\_psychology                & 76.31                    & 76.96                      \\
        public\_relations                       & 64.55                    & 69.09                      \\
        security\_studies                       & 71.43                    & 66.12                      \\
        sociology                               & 83.08                    & 84.58                      \\
        us\_foreign\_policy                     & 81.00                    & 84.00                      \\
        virology                                & 49.40                    & 49.40                      \\
        world\_religions                        & 77.78                    & 74.85                      \\
        \bottomrule
    \end{tabular}
\end{table}
\endgroup
\begingroup
\renewcommand{\arraystretch}{1}
\begin{table}[t]
    \caption{Comparison between Top-$k$/Top-$p$ and reflection-window decoding on MMLU with Qwen2.5-14B}
    \label{tab:decoding_comparison_qwen}
    \centering
    \footnotesize
    \begin{tabular}{L{0.35}C{0.20}C{0.20}}
        \toprule
        \textbf{Subject}                        & \textbf{Top-$k$/Top-$p$} & \textbf{Reflection-Window} \\
        \midrule
        abstract\_algebra                       & 78.00                    & 81.00                      \\
        anatomy                                 & 71.85                    & 76.30                      \\
        astronomy                               & 88.16                    & 91.45                      \\
        business\_ethics                        & 76.00                    & 75.00                      \\
        clinical\_knowledge                     & 81.51                    & 82.64                      \\
        college\_biology                        & 90.97                    & 86.11                      \\
        college\_chemistry                      & 64.00                    & 60.00                      \\
        college\_computer\_science              & 73.00                    & 73.00                      \\
        college\_mathematics                    & 72.00                    & 80.95                      \\
        college\_medicine                       & 73.41                    & 90.91                      \\
        college\_physics                        & 82.35                    & 84.31                      \\
        computer\_security                      & 82.00                    & 84.00                      \\
        conceptual\_physics                     & 82.55                    & 84.68                      \\
        econometrics                            & 61.40                    & 65.79                      \\
        electrical\_engineering                 & 73.10                    & 73.79                      \\
        elementary\_mathematics                 & 95.24                    & 94.97                      \\
        formal\_logic                           & 67.46                    & 61.11                      \\
        global\_facts                           & 51.00                    & 52.00                      \\
        high\_school\_biology                   & 88.39                    & 90.97                      \\
        high\_school\_chemistry                 & 80.30                    & 75.37                      \\
        high\_school\_computer\_science         & 91.00                    & 89.00                      \\
        high\_school\_european\_history         & 76.97                    & 81.82                      \\
        high\_school\_geography                 & 89.39                    & 88.89                      \\
        high\_school\_government\_and\_politics & 91.19                    & 92.75                      \\
        high\_school\_macroeconomics            & 85.64                    & 86.41                      \\
        high\_school\_mathematics               & 84.44                    & 84.27                      \\
        high\_school\_microeconomics            & 89.08                    & 88.60                      \\
        high\_school\_physics                   & 82.78                    & 79.47                      \\
        high\_school\_psychology                & 90.09                    & 91.38                      \\
        high\_school\_statistics                & 83.33                    & 81.48                      \\
        high\_school\_us\_history               & 87.25                    & 88.73                      \\
        high\_school\_world\_history            & 88.61                    & 86.92                      \\
        human\_aging                            & 73.54                    & 73.99                      \\
        human\_sexuality                        & 74.05                    & 81.68                      \\
        international\_law                      & 80.17                    & 77.69                      \\
        jurisprudence                           & 78.70                    & 84.26                      \\
        logical\_fallacies                      & 83.44                    & 85.28                      \\
        machine\_learning                       & 72.32                    & 74.11                      \\
        management                              & 82.52                    & 85.44                      \\
        marketing                               & 88.46                    & 90.60                      \\
        medical\_genetics                       & 89.00                    & 91.00                      \\
        miscellaneous                           & 89.40                    & 91.42                      \\
        moral\_disputes                         & 71.97                    & 72.54                      \\
        moral\_scenarios                        & 69.61                    & 68.49                      \\
        nutrition                               & 82.35                    & 81.70                      \\
        philosophy                              & 78.78                    & 80.06                      \\
        prehistory                              & 86.11                    & 83.02                      \\
        professional\_accounting                & 73.76                    & 71.99                      \\
        professional\_law                       & 54.04                    & 56.00                      \\
        professional\_medicine                  & 81.62                    & 86.40                      \\
        professional\_psychology                & 77.94                    & 79.08                      \\
        public\_relations                       & 71.82                    & 71.82                      \\
        security\_studies                       & 71.43                    & 73.06                      \\
        sociology                               & 85.07                    & 88.06                      \\
        us\_foreign\_policy                     & 87.00                    & 84.00                      \\
        virology                                & 50.00                    & 51.81                      \\
        world\_religions                        & 84.21                    & 84.80                      \\
        \bottomrule
    \end{tabular}
\end{table}
\endgroup

    \stopcontents[supp]

\end{bibunit}
\end{document}